\renewcommand{\labelenumi}{(\alph{enumi})}
\renewcommand\theenumi\labelenumi
\newtheorem{theorem}{Theorem}
\newtheorem{lemma}[theorem]{Lemma}
\newtheorem{definition}[theorem]{Definition}
\newcommand{\oea}{$(1 + 1)$~EA\xspace}
\newcommand{\mpoea}{$(\mu+1)$~EA\xspace}
\newcommand{\R}{\ensuremath{\mathbb{R}}}
\newcommand{\N}{\ensuremath{\mathbb{N}}} 
\newcommand{\Z}{\ensuremath{\mathbb{Z}}}
\begin{document}
\sloppy

\title{Analysis of Evolutionary Algorithms on Fitness Function with Time-linkage Property}

\author{Weijie~Zheng, Huanhuan~Chen, and~Xin~Yao%
\thanks{This work was supported by Guangdong Basic and Applied Basic Research Foundation (Grant No. 2019A1515110177), Guangdong Provincial Key Laboratory (Grant No. 2020B121201001), the Program for Guangdong Introducing Innovative and Enterpreneurial Teams (Grant No. 2017ZT07X386), Shenzhen Science and Technology Program (Grant No. KQTD2016112514355531), the Program for University Key Laboratory of Guangdong Province (Grant No. 2017KSYS008), National Natural Science Foundation of China (Grant No. 61976111), and Science and Technology Innovation Committee Foundation of Shenzhen (Grant No. JCYJ20180504165652917). (Corresponding author: Xin Yao.)}
\thanks{Weijie Zheng is with Guangdong Provincial Key Laboratory of Brain-inspired Intelligent Computation, Research Institute of Trustworthy Autonomous Systems (RITAS), Department of Computer Science and Engineering, Southern University of Science and Technology, Shenzhen, China. He is also with School of Computer Science and Technology, University of Science and Technology of China, Hefei, China.}
\thanks{Huanhuan Chen is with School of Computer Science and Technology, University of Science and Technology of China, Hefei, China.}
\thanks{Xin Yao is with Guangdong Provincial Key Laboratory of Brain-inspired Intelligent Computation, Research Institute of Trustworthy Autonomous Systems (RITAS), Department of Computer Science and Engineering, Southern University of Science and Technology, Shenzhen, China. He is also with CERCIA, School of Computer Science, University of Birmingham, Birmingham, United Kingdom.}
}

\maketitle

\begin{abstract}
In real-world applications, many optimization problems have the time-linkage property, that is, the objective function value relies on the current solution as well as the historical solutions. Although the rigorous theoretical analysis on evolutionary algorithms has rapidly developed in recent two decades, it remains an open problem to theoretically understand the behaviors of evolutionary algorithms on time-linkage problems. This paper takes the first step to rigorously analyze evolutionary algorithms for time-linkage functions. Based on the basic OneMax function, we propose a time-linkage function where the first bit value of the last time step is integrated but has a different preference from the current first bit. We prove that with probability $1-o(1)$, randomized local search and \oea cannot find the optimum, and with probability $1-o(1)$, $(\mu+1)$ EA is able to reach the optimum. 

%
\end{abstract}

\begin{IEEEkeywords}
Evolutionary algorithms, time-linkage, convergence, running time analysis.
\end{IEEEkeywords}

\section{Introduction}
Evolutionary Algorithms (EAs), one category of stochastic optimization algorithms that are inspired by Darwinian principle and natural selection, have been widely utilized in real-world applications. Although EAs are simple and efficient to use, the theoretical understandings on the working principles and complexity of EAs are much more complicated and far behind the practical usage due to the difficulty of mathematical analysis caused by their stochastic and iterative process. 

In order to fundamentally understand EAs and ultimately design efficient algorithms in practice, researchers begin the rigorous analysis on functions with simple and clear structure, majorly like pseudo-Boolean function and classic combinatorial optimization problem, like in the theory books~\cite{NeumannW10, AugerD11, Jansen13, ZhouYQ19, DoerrN20}. Despite the increasing attention and insightful theoretical analyses in recent decades, there remain many important open areas that haven't been considered in the evolutionary theory community. 

One important open issue is about the time-linkage problem. Time-linkage problem, firstly introduced by Bosman~\cite{Bosman05} into the evolutionary computation community, is the optimization problem where the objective function to be optimized relies not only on the solutions of the current time but also the historical ones. In other words, the current decisions also influence the future. There are plenty of applications with time-linkage property. We just list the dynamic vehicle routing with time-varying locations to visit~\cite{Bosman05} as a slightly detailed example. Suppose that the locations are clustered. Then the current vehicle serving some locations in one cluster is more efficient to serve other locations in the same cluster instead of serving the currently available locations when the locations oscillate among different clusters in future times. Besides, the efficiency of the current vehicle routing would influence the quality of the service, which further influences the future orders, that is, future locations to visit. In a word, the current routing and the impact of it in the future together determine the income of this company. The readers could also 
refer to the survey in~\cite{Nguyen11} to see more than $30$ real-world applications, like an optimal watering scheduling to improve the quality of the crops along with the weather change~\cite{MorimotoOSB07}.

The time-linkage optimization problems can be tackled offline or online according to different situations. If the problem pursues an overall solution with sufficient time budget and time-linkage dynamics can be integrated into a static objective function, then the problem can be solved offline. However, in the theoretical understanding on the static problem~\cite{NeumannW10, AugerD11, Jansen13, ZhouYQ19, DoerrN20}, no static benchmark function in the evolutionary theory community is time-linkage. 

Another situation that the real-world applications often encounter is that the solution must be solved online as the time goes by. This time-linkage online problem belongs to dynamic optimization problem~\cite{Nguyen11}. As pointed out in~\cite{Nguyen11}, the whole evolutionary community, not only the evolutionary theory community, is lack of research on these real-world problems. The dynamic problem analyzed so far in the theory community majorly includes Dynamic OneMax~\cite{Droste02}, Magnitude and Balance~\cite{RohlfshagenLY09}, Maze~\cite{KotzingM12}, Bi-stable problem~\cite{JansenZ15}, dynamic linear function~\cite{LenglerS18}, and dynamic BinVal function~\cite{LenglerM20} for dynamic pseudo-Boolean function, and dynamic combinatorial problems including single-destination shortest path problem~\cite{LissovoiW15}, makespan scheduling~\cite{NeumannW15}, vertex cover problem~\cite{PourhassanGN15}, subset selection~\cite{RoostapourNNF19}, graph coloring~\cite{BossekNPS19}, etc. However, there is no theoretical analysis on dynamic time-linkage fitness function, even no dynamic time-linkage pseudo-Boolean function is proposed for the theoretical analysis.


The main contributions of this paper can be summarized as follows. This paper conducts the first step towards the understanding of EAs on the time-linkage function. When solving a time-linkage problem by EAs in an offline mode, the first thing faced by the practitioners to utilize EAs is how they encode the solution. There are obviously two straightforward encoding ways. Take the objective function relying on solutions of two time steps as an example. One way is to merely ignore the time-linkage dependency by solving a non-time-linkage function with double problem size. The other way is to consider the time-linkage dependency, encode the solution with the original problem size, but store the solutions generated in the previous time steps for the fitness evaluation. 
When solving the time-linkage problem in an online mode, the engineers need to know before they conduct the experiments whether the algorithm they use can solve the problem or not. Hence, in this paper, we design a time-linkage toy function based on OneMax to shed some light on these questions. This function, called OneMax$_{(0,1^n)}$ where $n$ is the dimension size, is the sum of two components, one is OneMax fitness of the current $n$-dimensional solution, the other one is the value of the first dimension in the previous solution but multiplying minus dimension size. The design of this function considers the situation when the current solution prefers a different value from the previous solution, which could better show the influence of different encodings. Also, it could be the core element of some dynamic time-linkage functions and used in the situation that each time step we only optimize the current state of the online problem in a limited time, so that the analysis of this function could also show some insights to the undiscovered theory for the dynamic time-linkage function. 

For our results, this paper analyzes the theoretical behaviors of randomized local search (RLS) and two most common benchmark EAs, \oea and $(\mu+1)$ EA, on OneMax$_{(0,1^n)}$. We will show that with probability $1-o(1)$, RLS and \oea cannot find the optimum of OneMax$_{(0,1^n)}$ (\textbf{Theorem~\ref{thm:11EAOM}}) while the not small population size in $(\mu+1)$ EA can help it reach the optimum with probability $1-o(1)$ (\textbf{Theorem~\ref{thm:mpoEAOM}}). We also show that conditional on an event with probability $1-o(1)$, the expected runtime for $(\mu+1)$ EA is $O(n\mu)$ (\textbf{Theorem~\ref{thm:runtimempoEA}}).

The remainder of this paper is organized as following. In Section~\ref{sec:om01}, we introduce the motivation and details about the designed OneMax$_{(0,1^n)}$. Section~\ref{sec:oea} shows the theoretical results of RLS and \oea on OneMax$_{(0,1^n)}$, and our theoretical results of $(\mu+1)$ EA are shown in Section~\ref{sec:muoea}. Our conclusion is summarized in Section~\ref{sec:con}.

\section{OneMax$_{(0,1^n)}$ Function}
\label{sec:om01}
\subsection{OneMax$_{(0,1^n)}$ Function}
For the first time-linkage problem for theoretical analysis, we expect the function to be simple and with clear structure. OneMax, which counts the total ones in a bit string, is considered as one of the simplest pseudo-Boolean functions, and is a well-understood benchmark in the evolutionary theory community on static problems. Choosing it as a base function to add the time-linkage property could facilitate the theoretical understanding on the time-linkage property. Hence, the time-linkage function we will discuss in this paper is based on OneMax. In OneMax function, each dimension has the same importance and the same preference for having a dimension value $1$. We would like to show the difference, or more aggressively show the difficulty that the time-linkage property will cause, which could better help us understand the behavior of EAs on time-linkage problems. Therefore, we will introduce the solutions of the previous steps but with different importance and preference. For simplicity of analysis, we only introduce one dimension, let's say the first dimension, value of the last time step into the objective function but with the weight of $-n$, where $n$ is the dimension size. Other weights could also be interesting, but for the first time-linkage benchmark, we just take $-n$ to show the possible difficulty caused by the time-linkage property. More precisely, this function $f:\{0,1\}\times\{0,1\}^n \rightarrow \Z$ is defined by
\begin{align}
f(x^{ {t-1}},x^{ {t}})=\sum_{i=1}^n x_i^{ {t}}-nx_1^{ {t-1}}
\label{eq:oms}
\end{align}
for two consecutive $x^{ {t-1}}=(x_1^{ {t-1}},\dots,x_n^{ {t-1}})$ and $x^{ {t}}=(x_1^{ {t}},\dots,x_n^{ {t}}) \in \{0,1\}^n$. Clearly, (\ref{eq:oms}) consists of two components, OneMax component relying on the current individual, and the drawing-back component determined by the first bit value of the previous individual. If our goal is to maximize (\ref{eq:oms}), it is not difficult to see that the optimum is unique and the maximum value $n$ is reached if and only if $(x_1^{ {t-1}},x^{ {t}})=(0,1^n)$. Hence, we integrate $(0,1^n)$ and call (\ref{eq:oms}) OneMax$_{(0,1^n)}$ function.

The maximization of the proposed OneMax$_{(0,1^n)}$ function specializes the maximization of the more general time-linkage pseudo-Boolean problems ${h: \{0,1\}^n\times\dots\times\{0,1\}^n \rightarrow \R}$ defined by
\begin{align}
h(x^{ {t_0}},\dots,x^{ {t_0+\ell}})=\sum_{t=0}^{\ell} h_t(x^{ {t_0+t}}; x^{ {t_0}},\dots,x^{ {t_0+t-1}})
\label{eq:h}
\end{align}
for consecutive $x^{ {t_0}},x^{ {t_0+1}},\dots,x^{ {t_0+\ell}}$ where $\ell \in \N$ and could be infinite. OneMax$_{(0,1^n)}$ function could be regarded as a specialization with $\ell=1$, 
$h_0(x^{ {t_0}})=-nx_1^{ {t_0}}$, and $h_1(x^{ {t_0+1}};x^{ {t_0}})=\sum_{i=1}^n x_i^{ {t_0+1}}$.
We acknowledge that more complicated models are more interesting and practical, like with $\ell >1$, with more than one bit value and with other weight values for the historical solutions, etc., but current specialization facilitates the establishment of the first theoretical analysis for the time-linkage problems, and our future work will focus on the analyses of more complicated models.

\subsection{Some Notes}
\label{subsec:notes}
Time-linkage optimization problem can be solved offline or online due to different situations. In the following, we follow the main terminology from~\cite{Bosman05}, the first paper that used the term ``time-linkage'' and introduced it into the evolutionary computation community.
\subsubsection{Offline mode}
Solving the general time-linkage problem $h$ defined above in an offline mode means that we could evaluate all possible $(x^{ {t_0}},x^{ {t_0+1}},\dots,x^{ {t_0+\ell}})$ before determine the final solution for the problem $h$. In this case, the optimum is defined differently when we use different representations. Obviously, there are two straightforward kinds of representations. One is ignoring the time-linkage fact and encoding $(x^{ {t_0}},x^{ {t_0+1}},\dots,x^{ {t_0+\ell}})$ into an $n(\ell+1)$-bit string as one solution, and the optimum is a search point with $n(\ell+1)$ dimensions. We denote this optimum as $X^*$. Since the problem is transferred to a traditional non-time-linkage problem, it is not of interest of our topic. 

The other kind of representation is considering the time-linkage property and encoding $(x^{ {t'}},x^{ {t'+1}},\dots,x^{ {t_0+\ell}}),  {t'>t_0}$, into an $m$-bit string, $m \in  {\{n,2n,\dots,\ell n\}}$, and storing other historical solutions for objective function evaluation. In this case, the optimum is a search point with $m$ dimensions taking the same values as the last $m$ bit values of $X^*$, the optimum in the first kind of representation, condition on the stored solutions taking the same values as the corresponding bit values of $X^*$. For the considered OneMax$_{(0,1^n)}$ function, we encode an $n$-bit string as one solution and store the previous result for objective function evaluation, and the optimum is the current search point with all 1s condition on that the stored previous first bit has value 0. This representation is more interesting to us since now $h$ and OneMax$_{(0,1^n)}$ function are truly time-linkage functions and we could figure out how EAs react to the time-linkage property. Hence, the later sections only consider this representation when solving the OneMax$_{(0,1^n)}$ function in an offline mode is analyzed.

\subsubsection{Online mode}
As in \cite[Section 2]{Bosman05}, the online mode means that we regard it as a dynamic optimization problem, and that the process continues only when the decision on the current solution is made, that is, solutions cannot be evaluated for any time $t>t^{now}$ and we can only evaluate the quality of the historical and the present solutions. More precisely, at present time $t^{now}$, we can evaluate
\begin{align*}
\tilde{h}(x^{ {t_0}},\dots,x^{t^{now}})=\sum_{t=0}^{t^{now}- {t_0}} \tilde{h_t}(x^{ {t_0+t}}; x^{ {t_0}},\dots,x^{ {t_0+t-1}})
\end{align*}
where we note that $\tilde{h_t}$ could dynamically change and could be different from $h_t,t=0,\dots,t^{now}$ in (\ref{eq:h}) when the process ends at $t= {t_0+\ell}$, since the impact of the historical solutions could change when time goes by. Usually, for the online mode, the overall optimum within the whole time period as in the offline mode cannot be reached once some non-optimal solution is made in one time step. Hence, our goal is to obtain the function value as larger as possible before the end of the time period, or more specifically, to obtain some function value above one certain threshold. We notice the similarity to the discounted total reward in the reinforcement learning~\cite[Chapter~3]{SuttonB18}. However, as pointed in~\cite[Section 1]{Bosman05}, the online dynamic time-linkage problem is fundamentally different since in each time the decisions themselves are needed to be optimized and can only be made once, while in reinforcement learning the policies are the solutions and the decisions during the process serve to help determine a good policy. 

Back to the OneMax$_{(0,1^n)}$, the function itself is not suitable to be solved in an online mode since it only contains the previous time step and the current step. 
However, we could still relate OneMax$_{(0,1^n)}$ function to online dynamic optimization problems, via regarding it as one piece of the objective function that considers the overall results during a given time period and each time step we only optimize the current piece. For example, we consider the following dynamic problem
\begin{align}
h(x, {t}) = \max_{x} \sum_{ {\tau=2}}^{ {t}} e^{ {-t+\tau-1}} x^{ {\tau-2}}_1 - n x^{ {t-1}}_1 + \sum_{i=1}^n x^{ {t}}_i
\label{eq:sumom}
\end{align}
where ${x=\{x^2,\dots,x^{ {t}}\}}$, ${x^{ {\tau}}=(x_1^{ {\tau}},\dots,x_n^{ {\tau}}) \in \{0,1\}^n}$ for ${ {\tau=0,1,\dots,t}}$ and the initial $x^0$ and $x^1$ are given. 
For (\ref{eq:sumom}), our goal is to find the solution at some time step $ {t}$ when its function value is greater than $n-1$. Since the previous elements in $1,\dots, {t-2}$ time steps can contribute at most $\sum_{ {\tau=2}}^{ {t}} e^{ {-t+\tau-1}} \le 1/(e-1)$ value, the goal can be transferred to find the time step when the component of the current and the last step, that is, OneMax$_{(0,1^n)}$, has the value of $n$. Thus if we take the strategy for online optimization that we optimize the present each time as discussed in \cite[Section~3]{Bosman05}, that is, for the current time $ {t_{cur}}$, we optimize $h(x,  {t_{cur}})$ with knowing $x^0,\dots,x^{ {t_{cur}-1}}$, then the problem can be functionally regarded as maximizing OneMax$_{(0,1^n)}$ function with $n$-bit string encoding as time goes by. Hence, we could reuse the optimum of OneMax$_{(0,1^n)}$ function in the offline mode as our goal for solving (\ref{eq:sumom}) in an online mode, and call it ``optimum'' for this online mode with no confusion. 


In summary, considering the  OneMax$_{(0,1^n)}$ problem, we note that for the representation encoding $n$-bit string in an offline manner and for optimizing present in an online dynamic manner, the algorithm used for these two situations are the same but with different backgrounds and descriptions of the operators. The details will be discussed when they are mentioned in Sections~\ref{sec:oea} and~\ref{sec:muoea}.

\section{RLS and \oea Cannot Find the Optimum}
\label{sec:oea}
\subsection{RLS and \oea Utilized for OneMax$_{(0,1^n)}$}
\label{subsec:oea}
\oea is the simplest EA that is frequently analyzed as a benchmark algorithm in the evolutionary theory community, and randomized local search (RLS) can be regarded as the simplification of \oea and thus a pre-step towards the theoretical understanding of \oea. Both algorithms are only with one individual in their population. Their difference is on the mutation. In each generation, \oea employs the bit-wise mutation on the individual, that is, each bit is independently flipped with probability $1/n$, where $n$ is the problem size, while RLS employs the one-bit mutation, that is, only one bit among $n$ bits is uniformly and randomly chosen to be flipped. For both algorithms, the generated offspring will replace its parent as long as it has at least the same fitness as its parent. 

The general RLS and \oea are utilized for non-time-linkage function, and they do not consider how we choose the individual representation and do not consider the requirement to make a decision in a short time. We need some small modifications on RLS and \oea to handle the time-linkage OneMax$_{(0,1^n)}$ function. The first issue, the representation choice, only happens when the problem is solved in an offline mode. As mentioned in Section~\ref{subsec:notes}, for the two representation options, we only consider the one that encodes the current solution and stores the previous solution for fitness evaluation. In this encoding, 
we set that only offspring with better or equivalent fitness could affect the further fitness, hoping the optimization process to learn or approach to the situations which are suitable for the time-linkage property.
Algorithm~\ref{alg:1p1EA} shows our modified \oea and RLS for solving OneMax$_{(0,1^n)}$,
and we shall still use the name \oea and RLS in this paper with no confusion. In this case, the optimum of OneMax$_{(0,1^n)}$ is the $1^n$ as the current solution with the stored first bit value of the last generation being $0$. Practically, some termination criterion is utilized in the algorithms when the practical requirement is met. Since we aim at theoretically analyzing the time to reach the optimum, we do not set the termination criterion here. 
\begin{algorithm}[!ht]
    \caption{\oea /RLS to maximize fitness function $f$ requiring two consecutive time steps}
    {\small
    \begin{algorithmic}[1]
    \STATE {Generate the random initial two generations $X^0=(X_{1}^0,\dots,X_{n}^0)$ and $X^1=(X_{1}^1,\dots,X_{n}^1)$}
    \FOR {$g=1,2,\dots$}
    \STATEx {\quad$\%\%$ \textit{Mutation}}
    \STATE {For \oea, generate $\tilde{X}^g$ via independently flipping each bit value of $X^g$ with probability $1/n$;}
    \STATEx \quad{For RLS, generate $\tilde{X}^g$ via flipping one uniformly and randomly selected bit value of $X^g$}
    \STATEx {\quad$\%\%$ \textit{Selection}}
    \IF {$f(X^{g-1},X^g) > f(X^g,\tilde{X}^g)$}
    \STATE {$(X^g,X^{g+1})=(X^{g-1},X^g)$}
    \ELSE
    \STATE {$(X^g,X^{g+1})=(X^g,\tilde{X}^g)$}
    \ENDIF
    \ENDFOR
    \end{algorithmic}
    \label{alg:1p1EA}
    }
\end{algorithm}


The second issue, the requirement to make a decision in a short time, happens when the problem is solved in an online mode. Detailedly, consider the  {problem (\ref{eq:sumom})} we discussed in Section~\ref{subsec:notes} that in each time step we just optimize the present. If the time to make the decision is not so small that \oea or RLS can solve the $n$-dimension problem (OneMax function), then we could obtain $X^t=(1,\dots,1)$ in each time step $t$. Obviously, in this case, the sequence of $\{X^1,\dots,X^{t}\}$ we obtained will lead to a fitness less than $1$ for any time step $t$, and thus we cannot achieve our goal and this case is not interesting. Hence, we assume the time to make the decision is small so that we cannot solve $n$-dimensional OneMax function, and we can just expect to find some result with better or equivalent fitness value each time step. That is, we utilize \oea or RLS to solve OneMax$_{(0,1^n)}$ function, and the evolution process can go on only if some offspring with better or equivalent fitness appears. In this case, we can reuse Algorithm~\ref{alg:1p1EA}, but to note that the generation step $g$ need not to be the same as the time step $t$ of the fitness function since \oea or RLS may need more than one generation to obtain an offspring with better or equivalent fitness for one time step.

In a word, no matter utilizing \oea and RLS to solve OneMax$_{(0,1^n)}$ offline or online, in the theoretical analysis, we only consider Algorithm~\ref{alg:1p1EA} and the optimum is the current search point with all 1s condition on that the stored previous first bit has value 0, without mentioning the solving mode and regardless of the explanations of the different backgrounds.

\subsection{Convergence Analysis of RLS and \oea on OneMax$_{(0,1^n)}$}
\label{subsec:oeacon}
This subsection will show that with high probability RLS and \oea cannot find the optimum of OneMax$_{(0,1^n)}$. Obviously, OneMax$_{(0,1^n)}$ has two goals to achieve, one is to find all 1s in the current string, and the other is to find the optimal pattern $(0,1)$ in the first bit that the current first bit value goes to 1 when the previous first bit value is 0. The two goals are somehow contradictory, so that only one individual in the population of RLS and \oea will cause poor fault tolerance. Detailedly, as we will show in Theorem~\ref{thm:11EAOM}, with high probability, one of twos goal will be achieved before the optimum is found, but the population cannot be further improved. 

Since this paper frequently utilizes different variants of Chernoff bounds, to make it self-contained, we put them from~\cite{Doerr11,Doerr20} into Lemma~\ref{lem:chernoff}. Besides, before establishing our main result, with the hope that it might be beneficial for further research, we also discuss in Lemma~\ref{lem:improvefit} the probability estimate of the event that the increase number of ones from one parent individual to its offspring is $1$ under the condition that the increase number of ones is positive in one iteration of \oea, given the parent individual has $a$ zeros. 

\begin{lemma}[\cite{Doerr11,Doerr20}]
Let $\xi_1,\xi_2, \dots, \xi_m$ be independent random variables. Let $\Xi = \sum_{i=1}^m \xi_i$.
\begin{itemize}
\item[(a)] If $\xi_i$ takes values in $[0,1]$, then for all $\delta\in[0,1]$, ${\Pr[\Xi \le (1-\delta)E[\Xi]] \le \exp(-\delta^2E[\Xi]/2)}$.
\item[(b)] If $\xi_i$ takes values in an interval of length $c_i$, then for all $\lambda \ge 0$, 
$\Pr[\Xi \ge E[\Xi] + \lambda] \le \exp(-2\lambda^2/\sum_{i=1}^m c_i).$
\item[(c)] If $\xi_i$ follows the geometric distribution with success probability $p$ for all $i\in[1..m]$, then for all $\delta \ge 0$, 
$\Pr[\Xi \ge (1+\delta) E[\Xi]] \le \exp\left(-\frac{\delta^2(m-1)}{2(1+\delta)}\right);$
for all $\delta \in [0,1]$,
$\Pr[\Xi \le (1-\delta) E[\Xi]] \le \exp\left(-\frac{\delta^2 m}{2-\frac 43 \delta}\right).$
\end{itemize} 
\label{lem:chernoff}
\end{lemma}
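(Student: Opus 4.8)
The plan is to recognize all three bounds as standard instances of the exponential-moment (Chernoff) method and to derive each by the same recipe: for a threshold $s$ and a parameter $t>0$, apply Markov's inequality to $e^{t\Xi}$ (or to $e^{-t\Xi}$ for lower tails), use independence to factorize $E[e^{t\Xi}]=\prod_{i=1}^m E[e^{t\xi_i}]$, bound each factor, and finally minimize the resulting exponent over $t$. Since the three parts are quoted verbatim from~\cite{Doerr11,Doerr20}, the cleanest formal route is simply to cite those sources; below I indicate how each derivation runs.

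For part~(a) I would use that a $[0,1]$-valued variable with mean $p_i$ satisfies, by convexity, $E[e^{-t\xi_i}]\le 1-p_i(1-e^{-t})\le\exp(-p_i(1-e^{-t}))$, so that $E[e^{-t\Xi}]\le\exp(-E[\Xi](1-e^{-t}))$. Writing $\mu=E[\Xi]$ and applying Markov to $e^{-t\Xi}$ at the threshold $(1-\delta)\mu$ gives $\Pr[\Xi\le(1-\delta)\mu]\le\exp(-\mu(1-e^{-t})+t(1-\delta)\mu)$; the choice $t=-\ln(1-\delta)$ makes $e^{-t}=1-\delta$ and turns the exponent into $-\mu\bigl(\delta+(1-\delta)\ln(1-\delta)\bigr)$, whereupon the elementary inequality $\delta+(1-\delta)\ln(1-\delta)\ge\delta^2/2$ on $[0,1]$ yields $\exp(-\delta^2\mu/2)$. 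For part~(b) I would invoke Hoeffding's lemma, namely that a variable confined to an interval of length $c_i$ satisfies $E[e^{t(\xi_i-E[\xi_i])}]\le\exp(t^2c_i^2/8)$; multiplying over $i$ and applying Markov to $e^{t(\Xi-E[\Xi])}$ at $\lambda$ gives $\Pr[\Xi\ge E[\Xi]+\lambda]\le\exp(-t\lambda+t^2\sum_i c_i^2/8)$, and minimizing at $t=4\lambda/\sum_i c_i^2$ produces $\exp(-2\lambda^2/\sum_i c_i^2)$, matching the stated bound (whose denominator is the sum of the squared interval lengths).

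For part~(c) I would exploit that a sum of $m$ independent geometric variables with success probability $p$ is the waiting time for the $m$-th success in a sequence of independent $\Bernoulli(p)$ trials, so that $\Pr[\Xi>k]=\Pr[\Bin(k,p)<m]$ and $\Pr[\Xi\le k]=\Pr[\Bin(k,p)\ge m]$. This converts the upper tail of $\Xi$ into a lower tail of a binomial and the lower tail of $\Xi$ into an upper tail of a binomial, each controlled by the multiplicative Chernoff bounds for sums of independent $\{0,1\}$ variables (the same machinery as part~(a) together with its upper-tail companion). Choosing $k=(1\pm\delta)E[\Xi]$ and simplifying the binomial exponents then delivers the two stated geometric estimates.

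I expect no conceptual difficulty here, since these are textbook concentration inequalities; the only delicate point is reproducing the exact constants in the statement, in particular the $(m-1)$ appearing in the geometric upper tail and the $2-\tfrac43\delta$ denominator in the geometric lower tail. These arise from carefully chaining the elementary logarithmic inequalities used when optimizing $t$ and from the off-by-one inherent in the binomial conversion, so the main work is bookkeeping rather than insight. Because the lemma is quoted directly from~\cite{Doerr11,Doerr20}, the intended justification is a citation, and I would point to those references for the constant-tight versions rather than re-deriving them in full.
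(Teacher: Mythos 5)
The paper offers no proof of this lemma at all: it is quoted verbatim from \cite{Doerr11,Doerr20} precisely so these standard bounds need not be re-derived, and your conclusion that the intended justification is a citation is exactly the paper's approach, so your proposal is consistent with it. Your sketches for parts (a) and (b) are the standard, correct derivations; two fine points deserve recording. First, part (b) as printed has denominator $\sum_{i=1}^m c_i$, whereas Hoeffding's inequality (and your own derivation) gives $\sum_{i=1}^m c_i^2$; this is a typo in the paper's statement, harmless here because every application in the paper uses $c_i=1$. Second, for part (c) your binomial-conversion route does reproduce the lower tail exactly (substituting $\delta'=\delta/(1-\delta)$ into the multiplicative upper tail for binomials yields precisely the denominator $2-\tfrac43\delta$), but the upper tail with numerator $\delta^2(m-1)$ does not fall out of that conversion in all regimes---for instance when $\delta m<1$ the converted bound is strictly weaker than the stated one---and in \cite{Doerr20} this inequality is obtained by a direct moment-generating-function argument for sums of geometric variables (Witt's tail bound), not by reduction to binomials; so for that one inequality the citation is doing real mathematical work rather than, as you put it, bookkeeping.
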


\begin{lemma}
Suppose $X\in\{0,1\}^n$. $Y\in\{0,1\}^n$ is generated by independently flipping each bit of $X$ with probability $\frac 1n$. Let $a\in[1..n]$ be the number of zeros in $X$, and let $|X|$ denote the number of ones in $X$ and $|Y|$ for the number of ones in $Y$. Then 
\begin{align*}
\Pr[|Y|&-|X|=1 \mid |Y|>|X|] \\
={}& \frac{\sum_{i=1}^a \binom{a}{i}\binom{n-a}{i-1}\frac{1}{n^{2i-1}}(1-\frac 1n)^{n-2i+1}}{\sum_{i=1}^a\sum_{j=0}^{i-1}\binom{a}{i}\binom{n-a}{j}\frac{1}{n^{i+j}}(1-\frac{1}{n})^{n-i-j}} > 1-\frac{ea}{n}.
\end{align*}
\label{lem:improvefit}
\end{lemma}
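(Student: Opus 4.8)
The plan is to first obtain the two sums as exact probabilities by a direct counting argument, and then to prove the inequality by isolating the single dominant flipping event. Write the bit flips producing $Y$ as two independent groups: let $i$ be the number of the $a$ zeros of $X$ that are flipped (to ones) and $j$ the number of the $n-a$ ones that are flipped (to zeros). A fixed such outcome has probability $\binom{a}{i}\binom{n-a}{j}n^{-(i+j)}(1-\tfrac1n)^{n-i-j}$, and the net gain is $|Y|-|X|=i-j$. Summing this elementary probability over $i-j=1$ (i.e. $j=i-1$, $i\ge 1$) gives the numerator, and summing over $i-j\ge 1$ (i.e. $0\le j\le i-1$, $i\ge 1$) gives the denominator; their quotient is then exactly the claimed conditional probability $\Pr[|Y|-|X|=1\mid |Y|>|X|]$.

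For the inequality, I would abbreviate $P_1=\Pr[|Y|-|X|=1]$, $P_{>}=\Pr[|Y|>|X|]$, and $P_{\ge2}=\Pr[|Y|-|X|\ge2]=P_{>}-P_1$, so that the ratio equals $1-P_{\ge2}/P_{>}$ and it suffices to show $P_{\ge2}/P_{>}<ea/n$. The denominator is bounded below by its single largest contribution, the event of flipping exactly one zero and no ones, namely $P_{>}\ge P_1\ge \tfrac{a}{n}(1-\tfrac1n)^{n-1}\ge \tfrac{a}{en}$, using the standard estimate $(1-\tfrac1n)^{n-1}\ge e^{-1}$. For the overshoot term, note that $|Y|-|X|\ge2$ forces at least two of the $a$ zeros to be flipped, so a union bound over pairs of zeros yields $P_{\ge2}\le\binom{a}{2}n^{-2}<\tfrac{a^2}{2n^2}$.

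Combining the two estimates gives $P_{\ge2}/P_{>}<(a^2/2n^2)/(a/en)=ea/(2n)<ea/n$, hence $\Pr[|Y|-|X|=1\mid|Y|>|X|]=1-P_{\ge2}/P_{>}>1-ea/n$, as required (indeed the argument yields the slightly stronger bound $1-ea/(2n)$). I do not expect a genuine obstacle here: the derivation of the exact formula is routine bookkeeping, and the inequality reduces to a clean comparison between the leading $i=1$ term and the $i\ge2$ tail. The only points demanding care are fixing the summation ranges correctly (using the convention $\binom{n-a}{i-1}=0$ when $i-1>n-a$, so that the sum may run up to $a$), invoking $(1-\tfrac1n)^{n-1}\ge e^{-1}$, and noting that the statement is vacuously true in the degenerate regime where $ea/n\ge1$ makes the claimed lower bound nonpositive.
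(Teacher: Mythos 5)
Your proof is correct, and its skeleton matches the paper's: both derive the exact formula by counting flipped zeros ($i$) and flipped ones ($j$) with $|Y|-|X|=i-j$, both write the conditional probability as $1-P_{\ge 2}/P_{>}$ (the paper does this by subtracting the "overshoot" double sum from the denominator), and both lower-bound $P_{>}$ by the single event of flipping exactly one zero and nothing else, giving $P_{>}\ge \tfrac{a}{n}(1-\tfrac1n)^{n-1}\ge \tfrac{a}{en}$. Where you genuinely diverge is the key estimate on the overshoot. The paper bounds the double sum $\sum_{i\ge 2}\sum_{j\le i-2}\binom{a}{i}\binom{n-a}{j}n^{-(i+j)}(1-\tfrac1n)^{n-i-j}$ by a page of algebraic manipulation: it first collapses the inner $j$-sum to a factor $i-1$, then massages factorials and re-sums a binomial identity to get the bound $\tfrac{a(a-1)}{n^2}$. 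You instead observe that $|Y|-|X|\ge 2$ forces at least two of the $a$ zero positions to flip, and a union bound over pairs immediately gives $P_{\ge 2}\le \binom{a}{2}n^{-2}$. This is shorter, avoids all the factorial bookkeeping, removes the paper's need to treat $a=1$ as a separate case (your bound gives $P_{\ge 2}=0$ there automatically), and is even tighter by a factor of $2$, yielding the stronger conclusion $1-\tfrac{ea}{2n}$. The only thing the paper's computation buys in exchange is that it works directly with the exact sums appearing in the statement, which may be why the authors wrote it that way; as a bound, your probabilistic argument strictly improves on it.
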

\begin{proof}
Since 
\begin{align*}
\Pr&[|Y|>|X|] = \sum_{i=1}^a\sum_{j=0}^{i-1}\binom{a}{i}\binom{n-a}{j}\frac{1}{n^{i+j}}\left(1-\frac{1}{n}\right)^{n-i-j}\\
\Pr&[|Y|-|X|=1]=\sum_{i=1}^a \binom{a}{i}\binom{n-a}{i-1}\frac{1}{n^{2i-1}}\left(1-\frac 1n\right)^{n-2i+1},
\end{align*}
and ${\Pr[\left(|Y|-|X|=1\right) \cap \left(|Y|>|X|\right)] = \Pr[|Y|-|X|=1]}$, we have
\begin{align*}
\Pr[|Y|{}&-|X|=1 \mid |Y|>|X|] \\
={}& \frac{\Pr[(|Y|-|X|=1) \cap (|Y|>|X|)]}{\Pr[|Y|>|X|]}\\
={}& \frac{\sum_{i=1}^a \binom{a}{i}\binom{n-a}{i-1}\frac{1}{n^{2i-1}}(1-\frac 1n)^{n-2i+1}}{\sum_{i=1}^a\sum_{j=0}^{i-1}\binom{a}{i}\binom{n-a}{j}\frac{1}{n^{i+j}}(1-\frac{1}{n})^{n-i-j}}.
\end{align*}
For $a=1$, it is easy to see that 
\begin{align*}
\Pr[|Y|-|X|=1 \mid |Y|>|X|] = 1 > 1-\frac{e}{n}. 
\end{align*} 
For $a\ge2$, since
\begin{align*}
\sum_{i=2}^a&\sum_{j=0}^{i-2}\binom{a}{i}\binom{n-a}{j}\frac{1}{n^{i+j}}\left(1-\frac{1}{n}\right)^{n-i-j}\\
={}&\sum_{i=2}^a\binom{a}{i}\frac{1}{n^{i}} {\left(1-\frac{1}{n}\right)^{n-i}}\sum_{j=0}^{i-2}\binom{n-a}{j}\frac{1}{n^{j}} {\left(1-\frac{1}{n}\right)^{-j}}\\
\le {}& \sum_{i=2}^a\binom{a}{i}\frac{1}{n^{i}} {\left(1-\frac{1}{n}\right)^{n-i}}\sum_{j=0}^{i-2}\frac{(n-a)^j}{j!}\frac{1}{(n-1)^j} \\
\le {}&\sum_{i=2}^a\binom{a}{i}\frac{1}{n^{i}} {\left(1-\frac{1}{n}\right)^{n-i}} (i-1)\\
= {}& \sum_{i=2}^a \frac{a!}{i!(a-i)!} \frac{1}{n^{i}} {\left(1-\frac{1}{n}\right)^{n-i}}(i-1) \\
< {}&\sum_{i=2}^a\frac{a!}{(i-1)!(a-i)!} \frac{1}{n^{i}} {\left(1-\frac{1}{n}\right)^{n-i}}(i-1) \\
= {}& \sum_{i=2}^a\frac{a!}{(i-2)!(a-i)!}\frac{1}{n^{i}} {\left(1-\frac{1}{n}\right)^{n-i}} \\
= {}& \frac{a(a-1)}{n^2} \sum_{i=2}^a \binom{a-2}{i-2} \frac{1}{n^{i-2}} {\left(1-\frac{1}{n}\right)^{n-i}}\\
\le {}& \frac{a(a-1)}{n^2} \sum_{i=2}^a \binom{a-2}{i-2} \frac{1}{n^{i-2}} {\left(1-\frac{1}{n}\right)^{a-i}} \\
= {}&\frac{a(a-1)}{n^2}  {\left(\frac{1}{n}+1-\frac{1}{n}\right)^{a-2}} = \frac{a(a-1)}{n^2}
\end{align*}
and
\begin{align*}
\sum_{i=1}^a\sum_{j=0}^{i-1}{}&\binom{a}{i}\binom{n-a}{j}\frac{1}{n^{i+j}} {\left(1-\frac{1}{n}\right)^{n-i-j}} \\
\ge{}& \frac{a}{n} {\left(1-\frac{1}{n}\right)^{n-1}} > \frac{a}{en},
\end{align*}
we have
\begin{align*}
\frac{\sum_{i=2}^a\sum_{j=0}^{i-2}\binom{a}{i}\binom{n-a}{j}\frac{1}{n^{i+j}}(1-\frac{1}{n})^{n-i-j}}{\sum_{i=1}^a\sum_{j=0}^{i-1}\binom{a}{i}\binom{n-a}{j}\frac{1}{n^{i+j}}\left(1-\frac{1}{n}\right)^{n-i-j}} 
< \frac{\frac{a(a-1)}{n^2}}{\frac{a}{en}} < \frac{ea}{n}.
\end{align*}
With
\begin{align*}
\sum_{i=1}^a \binom{a}{i}{}&\binom{n-a}{i-1}\frac{1}{n^{2i-1}} {\left(1-\frac 1n\right)^{n-2i+1}} \\
= {}&\sum_{i=1}^a\sum_{j=0}^{i-1}\binom{a}{i}\binom{n-a}{j}\frac{1}{n^{i+j}} {\left(1-\frac{1}{n}\right)^{n-i-j} }\\
{}&-\sum_{i=2}^a\sum_{j=0}^{i-2}\binom{a}{i}\binom{n-a}{j}\frac{1}{n^{i+j}} {\left(1-\frac{1}{n}\right)^{n-i-j}},
\end{align*}
the lemma is proved.
\end{proof}

Now we show the behavior for RLS and \oea optimizing OneMax$_{(0,1^n)}$ function. 
The outline to establish our main theorem (Theorem~\ref{thm:11EAOM}) is shown in Figure~\ref{fig:relation}. 
First, Lemma~\ref{lem:stuck} shows two cases once one of them happens before the optimum is reached, both RLS and \oea cannot find the optimum in arbitrary further generations. Then for the non-trivial behaviors for three different initial states, Lemma~\ref{lem:00}, Lemma~\ref{lem:10}, and Lemma~\ref{lem:11} respectively show that the algorithm will get stuck in one of the two cases. 
In the following, all proofs don't specifically distinguish RLS and \oea due to their similarity, and discuss each algorithm independently only when they have different behaviors. We start with one definition that will be frequently used in our proofs.

\begin{figure}[!ht]
\centering
\includegraphics[width=3.45in]{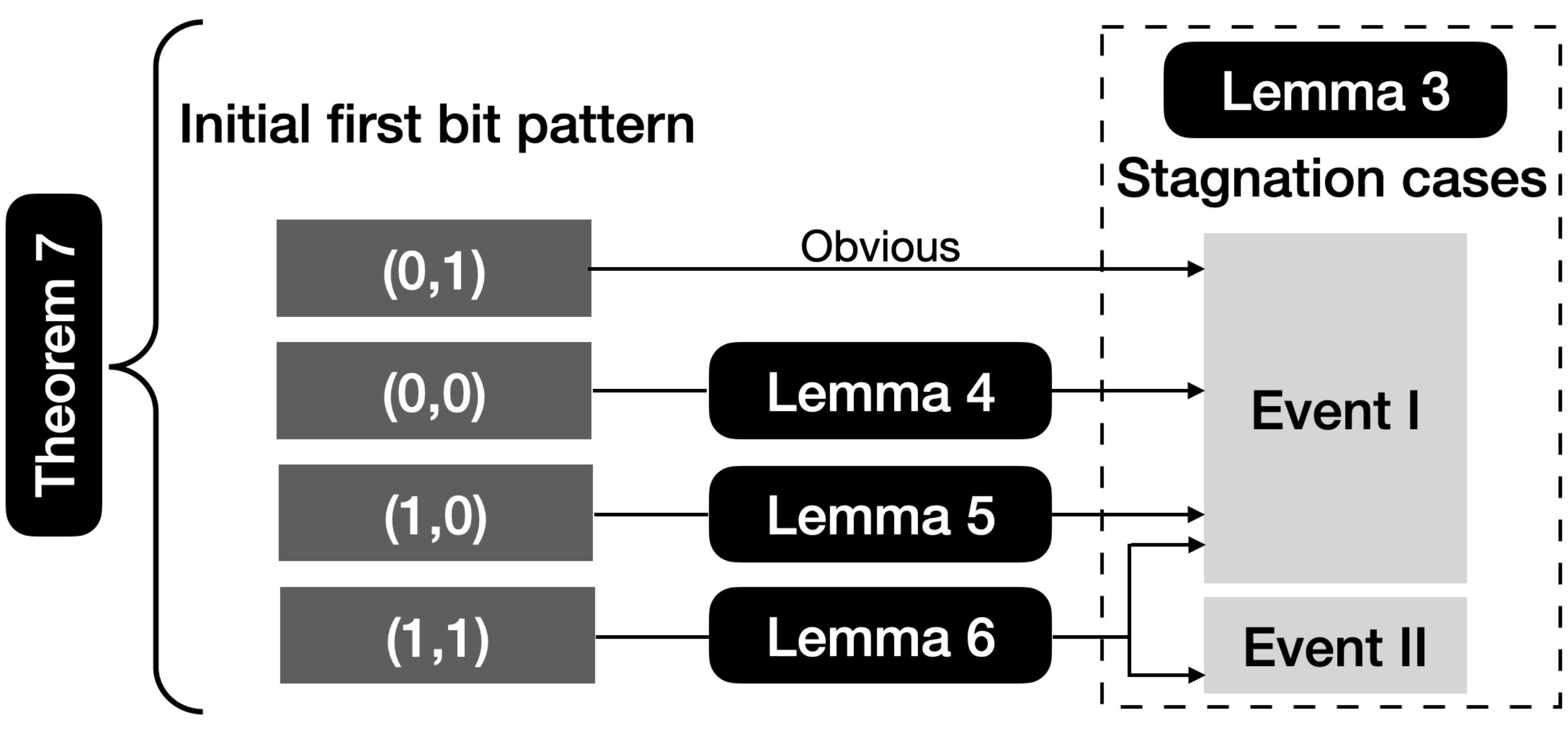}
\caption{Relationship between Theorem~\ref{thm:11EAOM} and Lemmas~\ref{lem:stuck},~\ref{lem:00},~\ref{lem:10}, and~\ref{lem:11}.}
\label{fig:relation}
\end{figure}

\begin{lemma}
Consider using \oea (RLS) to optimize $n$-dimensional OneMax$_{(0,1^n)}$ function. Let $X^0,X^1,\cdots$ be the solution sequence generated by the algorithm. Let 
\begin{itemize}
\item \emph{Event \textrm{I}}: There is a $g_0 \in \N$ such that $(X_{1}^{g_0-1},X_{1}^{g_0})=(0,1)$ and $X_{[2..n]}^{g_0} \neq 1^{n-1}$, 
\item \emph{Event \textrm{II}}: There is a $g_0 \in \N$ such that $(X_{1}^{g_0-1},X^{g_0})=(1,1^n)$. 
\end{itemize}
Then if at a certain generation among the solution sequence, \emph{Event \textrm{I}} or \emph{Event \textrm{II}} happens, then \oea (RLS) can not find the optimum of OneMax$_{(0,1^n)}$ in an arbitrary long runtime afterwards.
\label{lem:stuck}
\end{lemma}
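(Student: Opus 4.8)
The plan is to read the one-step dynamics directly off the selection rule of Algorithm~\ref{alg:1p1EA} and to show that each of the two events confines the search to a region that excludes the optimum. The crucial observation is that when the offspring $\tilde X^g$ is evaluated, its fitness is $f(X^g,\tilde X^g)=|\tilde X^g|-nX_1^g$, so the drawing-back penalty is governed by the first bit of the \emph{current} individual $X^g$ (which becomes the stored previous bit for the offspring), not by the old stored bit $X_1^{g-1}$. Hence, whenever the current individual already carries $X_1^g=1$, every offspring has fitness at most $|\tilde X^g|-n\le 0$, whereas the parent fitness may be much larger; this asymmetry is exactly what traps the algorithm. I would first state this explicitly and record the acceptance criterion: the offspring replaces the parent iff $f(X^g,\tilde X^g)\ge f(X^{g-1},X^g)$, and otherwise the pair $(X^{g-1},X^g)$ is carried over unchanged. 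The argument is mutation-agnostic, so it covers \oea and RLS simultaneously.

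For \emph{Event \textrm{I}}, suppose at generation $g_0$ we have $X_1^{g_0-1}=0$, $X_1^{g_0}=1$, and $X_{[2..n]}^{g_0}\neq 1^{n-1}$. Then the parent fitness is $f(X^{g_0-1},X^{g_0})=|X^{g_0}|\ge 1$, whereas any offspring satisfies $f(X^{g_0},\tilde X^{g_0})=|\tilde X^{g_0}|-n\le 0$ because $X_1^{g_0}=1$ and $|\tilde X^{g_0}|\le n$. Thus the offspring is strictly worse and is rejected, so the pair $(X^{g_0-1},X^{g_0})$ is left unchanged. Iterating, the state never changes; since $X^{g_0}\neq 1^n$, the current search point never becomes all-ones and the optimum is never reached.

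For \emph{Event \textrm{II}}, suppose at generation $g_0$ we have $X_1^{g_0-1}=1$ and $X^{g_0}=1^n$, giving parent fitness $n-n=0$. Mutating $1^n$ yields $|\tilde X^{g_0}|\le n$, so $f(X^{g_0},\tilde X^{g_0})=|\tilde X^{g_0}|-n\le 0$, with equality iff $\tilde X^{g_0}=1^n$. Hence the offspring is accepted only when it equals $1^n$, in which case the new pair is $(1^n,1^n)$; otherwise the pair $(X^{g_0-1},1^n)$ is retained. I would then argue by induction that the invariant ``current individual equals $1^n$ and stored first bit equals $1$'' is preserved at every generation, which pins the fitness at $0$. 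Since the optimum requires the stored first bit to be $0$, it is never reached.

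The fitness arithmetic is routine; the step needing the most care is the bookkeeping of the stored previous bit through the ``reject'' branch, where the algorithm reinstates $(X^{g-1},X^g)$, and the \emph{Event \textrm{II}} induction that simultaneously rules out the current individual ever leaving $1^n$ and the stored first bit ever turning to $0$. Once this invariant is set up correctly, both cases close immediately, because no accepted move can ever produce a state with both an all-ones current individual and a $0$ stored first bit.
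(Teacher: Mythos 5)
Your proposal is correct and follows essentially the same argument as the paper's proof: in both events, the fact that the current individual has first bit $1$ forces every offspring's fitness to be at most $0$, so in Event I the strictly better pair $(0,1)$ state is frozen forever, and in Event II only $\tilde X^{g}=1^n$ can be accepted, preserving the invariant that the current individual is $1^n$ with stored first bit $1$. Your explicit bookkeeping of the reject branch and the Event II induction invariant is a slightly more formal write-up of exactly what the paper does.
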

\begin{proof}
Consider the case when \emph{Event \textrm{I}} happens, then $(X_{1}^{g_0-1},X_{1}^{g_0})=(0,1)$. In this case, the current fitness $f(X^{g_0-1},X^{g_0})\ge 1$. For every possible $\tilde{X}^{g_0}$, the mutation outcome of $X^{g_0}$, since $X_{1}^{g_0}=1$, we know $f(X^{g_0},\tilde{X}^{g_0})\le 0$ regardless of the values of other bits in $\tilde{X}^{g_0}$. Hence, $\tilde{X}^{g_0}$ cannot enter in the $(g_0+1)$-th generation, that is, any progress achieved in the OneMax component of OneMax$_{(0,1^n)}$ function (any bit value changing from $0$ to $1$ from the $2$-nd to the $n$-th bit position) cannot pass on to the next generation. Besides, $(X_{1}^{g_0},X_{1}^{g_0+1})=(X_{1}^{g_0-1},X_{1}^{g_0})=(0,1)$, then $(X_{1}^g,X_{1}^{g+1})=(0,1)$ holds for all $g\ge g_0-1$. That is, RLS or \oea gets stuck in this case. 

Consider the case when \emph{Event \textrm{II}} happens, that is, $(X_{1}^{g_0-1},X^{g_0})=(1,1^n)$. In this case, the current fitness $f(X^{g_0-1},X^{g_0})=0$. Similar to the above case, since $X_{1}^{g_0}=1$, all possible mutation outcome $\tilde{X}^{g_0}$ along with $X^{g_0}$ will have fitness less than or equal to $0$. Hence, only when $f(X^{g_0},\tilde{X}^{g_0})=0$ happens, $\tilde{X}^{g_0}$ can enter into the next generation, which means $\tilde{X}^{g_0}=1^n$. Therefore, RLS or \oea will get stuck in this case.
\end{proof}

\begin{lemma}
Consider using \oea (RLS) to optimize $n$-dimensional OneMax$_{(0,1^n)}$ function. Assume that at the first generation, $(X_{1}^0,X_{1}^1)=(0,0)$ and ${\sum_{j=2}^n X_{j}^1 < \frac 34 n}$. Then with probability at least $1-\frac{e+1}{n^{1/3}}$, \emph{Event \textrm{I}} will happen at one certain generation after this initial state.
\label{lem:00}
\end{lemma}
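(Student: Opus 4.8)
The plan is to track the run from the given state and show that Event~I can fail only through one unlikely scenario, which I then bound by two contributions matching $\tfrac{1}{n^{1/3}}$ and $\tfrac{e}{n^{1/3}}$.

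First I would set up a reduction. From the hypothesis the stored first bit is $0$, and I claim it stays $0$ until the current first bit is accepted as a $1$: starting in $(X_{1}^{g-1},X_{1}^{g})=(0,0)$ the current fitness equals $|X^{g}|$, so selection is exactly plain-\om selection on $X^{g}$ (accept iff the number of ones does not decrease), and an accepted offspring sets the new ``previous'' first bit to the old $X_{1}^{g}=0$. Hence the process is \om optimisation on $X^{g}$ together with one absorbing event, namely the generation at which the current first bit is flipped to $1$ in an accepted step. At that generation the state becomes $(0,1)$, which by Lemma~\ref{lem:stuck} is Event~I unless the accepted offspring equals $1^{n}$, in which case the optimum has been found. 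Since the flip occurs with positive probability each generation, it happens almost surely, and $X^{g}=1^{n}$ is impossible while the first bit is $0$; therefore Event~I fails precisely when the first accepted flip of the current first bit produces $1^{n}$, i.e. when that step turns the first bit into a $1$ and simultaneously completes all remaining zeros of $X^{g}$.

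Next I would bound this failure. Let $a$ be the number of zeros of $X^{g}$ (including the first bit); by $\sum_{j=2}^{n}X_{j}^{1}<\tfrac34 n$ we start with $a_{0}>n/4$. Fix the threshold $n^{2/3}$ and split the failing step into two cases. (i) The failing step has only the first bit left as a zero, i.e.\ $X^{g}=0\,1^{n-1}$ is reached while the first bit is still $0$. Because each improving step typically flips a single remaining zero to a one (uniformly at random among the remaining zeros, the rare multi-bit exceptions being absorbed into the error term), the first bit is filled last only if it lies among the last $n^{2/3}$ zeros of a uniformly random fill order, which has probability $O(n^{2/3}/a_{0})=O(n^{-1/3})$. (ii) The failing step still has $\ge 2$ zeros to complete, so it must raise the number of ones by at least two at once; applying Lemma~\ref{lem:improvefit} at that completing step, where $a\le n^{2/3}$, the conditional probability that an improving step increases the ones by more than one is at most $ea/n\le e/n^{1/3}$. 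Completing to $1^{n}$ from more than $n^{2/3}$ zeros would require flipping that many prescribed bits at once and is superpolynomially unlikely, so it is folded into the error terms. Summing the two contributions and collecting constants (using Lemma~\ref{lem:chernoff} to control the number of improving steps and the event that the threshold $n^{2/3}$ is actually reached) yields the claimed bound $1-\tfrac{e+1}{n^{1/3}}$.

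The hard part is controlling the bit-wise mutation of \oea, where a single step may flip several bits simultaneously: both the ``uniformly random zero is filled'' picture behind (i) and the ``$\ge 2$ jump'' accounting in (ii) are exact only for RLS, whose one-bit flips make the fill order literally uniform and forbid any jump larger than one (so there the failure probability is in fact only $O(1/n)$). For \oea I would invoke Lemma~\ref{lem:improvefit} to guarantee that throughout the polynomially many relevant generations every improving step raises the number of ones by exactly one with overwhelming probability, and absorb the rare multi-bit improvements and direct large jumps to $1^{n}$ into the error term; this is exactly where the factor $e$ and the exponent $1/3$, arising from $ea/n$ at $a=n^{2/3}$, are produced.
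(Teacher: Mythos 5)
Your reduction itself is sound, and it is genuinely different from (and cleaner than) the paper's setup: while the current and stored first bits are both $0$, selection is exactly plain \om selection, so Event I fails precisely when the first \emph{accepted} flip of the first bit produces $1^n$. The paper instead never needs this characterization: it discards everything until the number of zeros among positions $2..n$ first drops below $n^{1/3}$ (if the first bit flips earlier, Event I has already happened), and then over the at most $n^{1/3}$ remaining improving steps it pays $e/n^{1/3}$ (union bound via Lemma~\ref{lem:improvefit}) for all of them being $+1$-steps, and $1/n^{1/3}$ (the telescoping product $\prod_a (1-\tfrac1a)$) for the first bit never being the flipped zero. Your execution of the reduction, however, has genuine gaps, all located exactly where you yourself say the hard part is — the multi-bit mutations of the \oea.

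Concretely: (1) Your case (i) is justified by ``the rare multi-bit exceptions being absorbed into the error term,'' but whether the first bit is among the last $n^{2/3}$ zeros to be filled is decided during the phase where $a$ falls from $a_0>n/4$ to $n^{2/3}$, and there the bound $ea/n$ of Lemma~\ref{lem:improvefit} is vacuous; for $a=\Theta(n)$, improving steps flip two or more zeros with \emph{constant} conditional probability, so these are not rare exceptions (the same problem falsifies your closing claim that every improving step is a $+1$-step ``with overwhelming probability'' throughout the run). The statement you want is true, but by a different argument: conditional on an improving step flipping $i$ zeros, those $i$ zeros are a uniform random subset of the remaining zeros, so $\Pr[\text{first bit still zero when } a\le n^{2/3}]\le n^{2/3}/a_0$ holds by exchangeability regardless of jump sizes. (2) Even granting both of your bounds, the constants do not ``collect'': $n^{2/3}/a_0 \le 4/n^{1/3}$ plus $e/n^{1/3}$ gives $(4+e)/n^{1/3} > (e+1)/n^{1/3}$, so the lemma as stated is not proved; you need either a smaller threshold or the sharper symmetry bound $\Pr[\text{case (i)}]\le 1/a_0 = O(1/n)$ (the first bit must be the \emph{unique last} zero filled). (3) In case (ii) you apply Lemma~\ref{lem:improvefit} to ``that completing step,'' but the lemma conditions only on the step being improving, whereas your failing step is additionally conditioned on flipping the first bit (and on being the first such step); that is a different conditional law, and the lemma does not transfer as written. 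The clean repair is a direct comparison at each step: flipping the first bit together with all remaining $a'-1\ge 1$ zeros and no ones has probability $n^{-a'}(1-\tfrac1n)^{n-a'}$, while flipping the first bit alone (always accepted, yielding Event I) has probability $n^{-1}(1-\tfrac1n)^{n-1}$, so the conditional failure probability in case (ii) is at most $(n-1)^{1-a'}\le 1/(n-1)$ — which both closes this gap and shows the lemma's $O(n^{-1/3})$ bound is far from tight under your decomposition.
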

\begin{proof}
Consider the subsequent process once the number of $0$-bits among $\{2,\dots,n\}$ bit positions of the current individual, becomes less than $n^{c}$ for any given constant $c < 0.5$. Note that if the first bit value changes from $0$ to $1$ before the number of $0$-bits decreases to $n^{c}$, \emph{Event \textrm{I}} already happens. Hence, we just consider the case that the current first bit value is still $0$ at the first time when the number of the remaining $0$-bits decreases to $n^{c}$. Let $a$ denote the number of $0$-bits of the current individual. 
We conduct the proof of this lemma based on the following two facts.
\begin{itemize}
\item \emph{Among increase steps (the fitness has an absolute increase), a single increase step increases the fitness by $1$ with conditional probability at least $1-ea/n$}. For RLS, due to its one-bit mutation, the amount of fitness increase can only be $1$ for a single increase step. For \oea, Lemma~\ref{lem:improvefit} directly shows this fact.
\item \emph{Under the condition that one step increases the fitness by $1$, with conditional probability at least $1/a$, the first bit changes its value from $0$ to $1$}. It is obvious for RLS. For \oea, suppose that the number of bits changing from $0$ to $1$ in this step is $m\in[1..a]$, then the probability that the first bit contributes one $0$ is $\binom{a-1}{m-1}/\binom{a}{m}=m/a \ge 1/a$ for $m \ge 1$.
\end{itemize}
Note that there are $a-1$ increase steps before the remaining $n-1$ positions become all $1$s if each increase step increases the fitness by $1$. Then with the above two facts, it is easy to see that the probability that one individual with the $(0,1)$ first bit pattern is generated before remaining positions all have bit value $1$ is at least
\begin{align*}
\Bigg(&\prod_{a=n^{c}}^2\left(1-\frac{ea}{n}\right)\Bigg)\left(1-\prod_{a=n^{c}}^2\left(1-\frac{1}{a}\right)\right)\\
={}&\left(\prod_{a=n^{c}}^2 \left(1-\frac{ea}{n}\right)\right)\left(1-\prod_{a=n^{c}}^2\frac{a-1}{a}\right)\\
\ge{}& {\left(1-\frac{e}{n^{1-c}}\right)^{n^c}\left(1-\frac{1}{n^c}\right) \ge 1-\frac{e}{n^{1-2c}}-\frac{1}{n^c}}\\
\ge {}&  {1-\frac{e+1}{\min\{n^{1-2c},n^c\}}}. 
\end{align*}
We could just set $c=\frac13$ and obtain the probability lower bound as $1-\frac{e+1}{n^{1/3}}$.
\end{proof}

\begin{lemma}
Consider using \oea (RLS) to optimize $n$-dimensional OneMax$_{(0,1^n)}$ function. Assume that at the first generation, $(X_{1}^0,X_{1}^1)=(1,0)$ and ${\sum_{j=2}^n X_{j}^1 < \frac 34 n}$. Then with probability at least $1-\frac{e+1}{n^{1/3}}$, \emph{Event \textrm{I}} will happen at one certain generation after this initial state.
\label{lem:10}
\end{lemma}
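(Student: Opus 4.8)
The plan is to exploit a special feature of the initial state $(X_1^0,X_1^1)=(1,0)$: the very first generation is accepted unconditionally, and once it is done the ``bad'' bit $X_1^0=1$ has left the two-generation window, so that the process reduces to the situation already treated in Lemma~\ref{lem:00}. Concretely, at generation $g=1$ the parent fitness is $f(X^0,X^1)=|X^1|-nX_1^0=|X^1|-n\le 0$, while the offspring fitness is $f(X^1,\tilde X^1)=|\tilde X^1|-nX_1^1=|\tilde X^1|\ge 0$ because $X_1^1=0$. Hence $f(X^0,X^1)\le 0\le f(X^1,\tilde X^1)$, the rejection condition $f(X^0,X^1)>f(X^1,\tilde X^1)$ can never hold, and the offspring is always accepted. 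After this step the current pair is $(X^1,\tilde X^1)$, whose \emph{previous} first bit is $X_1^1=0$; from generation $g=2$ on the run depends only on this pair, so the initial value $X_1^0=1$ plays no further role.

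Next I would split on the value of $\tilde X_1^1$. If $\tilde X_1^1=1$, then $(X_1^1,X_1^2)=(0,1)$, and unless the offspring equals $1^n$ we have $X^2_{[2..n]}\neq 1^{n-1}$, so Event I occurs already at $g_0=2$; the sole exception $\tilde X^1=1^n$ would require flipping all of the more than $n/4-1$ zero-bits among positions $2..n$ in a single mutation, which is impossible for RLS and has probability $n^{-\Omega(n)}$ for \oea. If instead $\tilde X_1^1=0$, then $(X_1^1,X_1^2)=(0,0)$ and the pair $(X^1,\tilde X^1)$ is exactly a starting configuration of the type analysed in Lemma~\ref{lem:00}, so a re-indexed appeal to that lemma yields Event I with probability at least $1-\frac{e+1}{n^{1/3}}$.

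The remaining point is to confirm that the hypothesis of Lemma~\ref{lem:00} survives the first step, i.e.\ that $X^2$ still carries enough zeros among positions $2..n$. Since $\sum_{j=2}^n X_j^1<\tfrac34 n$ leaves more than $n/4-1$ zeros there, and the proof of Lemma~\ref{lem:00} in fact only needs $\Omega(n)\gg n^{c}$ such zeros, it suffices that the opening mutation destroys few of them: for RLS it removes at most one, and for \oea the number of flipped bits exceeds $\log n$ only with probability $n^{-\omega(1)}$, so with probability $1-n^{-\omega(1)}$ at least $\Omega(n)$ zeros remain. Combining the two cases gives Event I with probability at least $1-\frac{e+1}{n^{1/3}}$, the additional failure contributions being of order $n^{-\Omega(n)}$ and thus absorbed into the stated bound. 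The main obstacle is really the first observation---recognising that the penalty $-nX_1^0$ acts on the \emph{previous} first bit, which is why the opening generation accepts no matter what the offspring looks like---after which the statement follows from a shifted invocation of Lemma~\ref{lem:00}.
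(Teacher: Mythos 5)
Your proof is correct and follows essentially the same route as the paper's: observe that the first offspring is always accepted (the parent pair with pattern $(1,0)$ has fitness at most $0$ while any offspring pair has fitness at least $0$), split on whether the first bit flips in that step (probability $1/n$, yielding Event I) or not (reducing, by the Markov property, to the $(0,0)$ situation of Lemma~\ref{lem:00}), and combine the two cases to get the stated bound. The only difference is that you also patch two edge cases the paper silently glosses over---the negligible-probability all-ones offspring, and the fact that the mutation in the opening step leaves enough zeros for the proof of Lemma~\ref{lem:00} to apply---which only tightens the argument (though note the second failure probability is $n^{-\omega(1)}$, quasi-polynomially rather than $n^{-\Omega(n)}$ small, still easily absorbed by the slack $\frac{e+1}{n^{4/3}}$).
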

\begin{proof}
Since $(X_{1}^0,X_{1}^1)=(1,0)$, we know that any offspring will have better fitness than the current individual, and will surely enter into the next generation. Then with probability $1/n$, the first bit value in the next generation becomes $1$, that is, \emph{Event \textrm{I}} happens. Otherwise, with probability $1-1/n$, it turns to the above discussed $(X_{1}^0,X_{1}^1)=(0,0)$ situation. Hence, in this situation, the probability that eventually \emph{Event \textrm{I}} happens is at lea st
\begin{align*}
& {\frac 1n+\left(1-\frac 1n\right)\left(1-\frac{e+1}{n^{1/3}}\right) \ge 1-\frac{e+1}{n^{1/3}}}.
\qedhere
\end{align*}
\end{proof}

\begin{lemma}
Consider using \oea (RLS) to optimize $n$-dimensional OneMax$_{(0,1^n)}$ function. Assume that at the first generation, $(X_{1}^0,X_{1}^1)=(1,1)$ and ${\sum_{j=2}^n X_{j}^1 < \frac 34 n}$. Then with probability at least $1-\frac{e+1}{n^{1/3}}-(n-1)\exp{\left(-\frac{n^{1/3}}{e}\right)}$, \emph{Event \textrm{I}} or \emph{Event \textrm{II}} will happen at one certain generation after this initial state.
\label{lem:11}
\end{lemma}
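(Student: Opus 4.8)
The plan is to track the algorithm through the four possible first-bit patterns $(X_1^{g-1},X_1^g)$ and exploit the acceptance rule of Algorithm~\ref{alg:1p1EA}. In the initial pattern $(1,1)$ the current fitness is $|X^1|-n<0$, so an offspring is accepted exactly when its number of ones does not decrease. I would first dispatch RLS: flipping the first bit from $1$ to $0$ strictly loses a one, so under one-bit mutation it is always rejected; RLS therefore keeps the first bit at $1$, only accepts $0\to1$ flips among positions $2,\dots,n$, and monotonically fills them until the current string is $1^n$ with the stored first bit still $1$, i.e.\ \emph{Event~\textrm{II}}. Thus the statement holds for RLS with probability $1$, and the probabilistic loss in the bound is produced entirely by \oea.

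For \oea I would branch on whether the first bit of the current individual ever flips from $1$ to $0$ in an accepted step. If it never does, the run is just \oea on \om restricted to positions $2,\dots,n$ with the first bit pinned to $1$, which reaches the current string $1^n$ while the stored first bit is $1$, again \emph{Event~\textrm{II}}. If it does flip, the state enters the pattern $(1,0)$, where the fitness is $|X|-n<0$ while every offspring has fitness $\ge0$; hence the next step is accepted unconditionally and moves to a $(0,\cdot)$ pattern. Provided that at the flip the number of ones among positions $2,\dots,n$ is still below $\tfrac34 n$, the Markov property lets me restart the analysis from this $(1,0)$ state and invoke Lemma~\ref{lem:10}, which yields \emph{Event~\textrm{I}} with probability at least $1-\frac{e+1}{n^{1/3}}$.

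The delicate point, and the step I expect to be the main obstacle, is guaranteeing that when the first bit flips the ones-count among $2,\dots,n$ is still below $\tfrac34 n$ (so Lemma~\ref{lem:10} applies) and that otherwise \emph{Event~\textrm{II}}, not the global optimum $(0,1^n)$, is reached. The real danger is a \emph{late} flip: if the first bit turns to $0$ only when $a$ zeros remain among $2,\dots,n$, the ensuing $(0,0)$ endgame fills the $a+1$ remaining zeros, and by the mechanism behind Lemma~\ref{lem:00} the first bit is filled last -- giving the optimum rather than \emph{Event~\textrm{I}} -- with probability about $1/(a+1)$, which is negligible only when $a\gtrsim n^{1/3}$. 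To rule out late flips I would use that once at most $n^{1/3}$ zeros remain in $2,\dots,n$, an accepted $1\to0$ flip of the first bit must be compensated by simultaneously flipping one of those few zeros and is therefore very improbable; coupling this with a Chernoff bound from Lemma~\ref{lem:chernoff} that forbids the atypically large single-step fitness jumps which could otherwise push the ones-count past $\tfrac34 n$ at the flip, and taking a union bound over the at most $n-1$ ones-levels, I would collect all these exceptional events into the term $(n-1)\exp(-n^{1/3}/e)$. The complementary, typical behaviour is then precisely what Lemmas~\ref{lem:00} and~\ref{lem:10} analyse, which accounts for the remaining $\frac{e+1}{n^{1/3}}$ loss.

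Finally I would combine the branches by the law of total probability: outside the $(n-1)\exp(-n^{1/3}/e)$ exceptional events, the run either reaches \emph{Event~\textrm{II}} directly or enters a $(1,0)$ state meeting the hypothesis of Lemma~\ref{lem:10}, which then delivers \emph{Event~\textrm{I}} up to the additional $\frac{e+1}{n^{1/3}}$ loss. A union bound over the two loss terms gives the asserted probability $1-\frac{e+1}{n^{1/3}}-(n-1)\exp(-n^{1/3}/e)$.
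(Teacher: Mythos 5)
Your skeleton is the paper's: RLS rejects every $1\to 0$ flip of the first bit under one-bit mutation and therefore runs deterministically into \emph{Event \textrm{II}}; for the \oea you split on the time of the first accepted $1\to 0$ flip of the first bit, send early flips to the $(1,0)$ analysis of Lemma~\ref{lem:10}, and handle ``no flip or late flip'' by bounding the time to fill positions $2,\dots,n$ (union bound over the $n-1$ positions, giving the $(n-1)\exp(-n^{1/3}/e)$ term) together with the per-step probability of an accepted flip once at most $n^{1/3}$ zeros remain. This is exactly the paper's case split with $c=1/3$.

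The step that fails is your device for making Lemma~\ref{lem:10} applicable, namely forcing the ones-count among positions $2,\dots,n$ to be below $\frac34 n$ at the moment of an early flip by excluding ``atypically large single-step jumps'' via Lemma~\ref{lem:chernoff} and a union bound over levels. The ones-count does not cross $\frac34 n$ through a rare jump; it crosses it through the typical $+1$ improvements the algorithm is designed to make, and an accepted first-bit flip while the zeros-count $a$ lies in $[n^{1/3},n/4)$ (so the ones-count already exceeds $\frac34 n$) is not rare at all: the algorithm spends on the order of $n/a$ generations at zeros-level $a$, each accepting a first-bit flip with probability on the order of $a/n^2$, so every level contributes order $1/n$ and the window $[n^{1/3},n/4)$ as a whole contributes a constant. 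Hence, conditional on an early flip, the hypothesis of Lemma~\ref{lem:10} is violated with constant probability, and no concentration inequality can push this case into an exponentially small term. The repair is the observation the paper uses tacitly when it writes that one ``turns to the $(1,0)$ situation'': the $\frac34 n$ assumption in Lemmas~\ref{lem:00} and~\ref{lem:10} is used in their proofs only to guarantee that the zeros-count is at least $n^{1/3}$ when the analysed final phase begins, and your ``early flip'' condition supplies precisely that; so you must invoke the argument of those lemmas rather than their statements. Two further accounting slips: your late-flip probability, roughly $n^{4/3}\cdot n^{-5/3}=n^{-1/3}$, cannot be charged to the $(n-1)\exp(-n^{1/3}/e)$ term as you propose; and the two branch losses must be combined by taking the maximum over the disjoint branches (so that this $n^{-1/3}$ is absorbed by $\frac{e+1}{n^{1/3}}$), since literally summing them, as your final union bound suggests, would overshoot the claimed bound.
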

\begin{proof}
For $(X_{1}^0,X_{1}^1)=(1,1)$, since in each iteration only one bit can be flipped for RLS, once the first bit is flipped from $1$ to $0$, the fitness of the offspring will be less than its parent and the offspring cannot enter into the next generation. Hence, for RLS, the individual will be eventually evolved to $(X_{1}^{g_0-1},X^{g_0})=(1,1^n)$ for some $g_0\in\N$. That is, \emph{Event \textrm{II}} happens. 

For \oea, similar to the $(X_{1}^0,X_{1}^1)=(0,0)$ situation, we consider the subsequent process once the number of $0$-bits among $\{2,\dots,n\}$ bit positions of the current individual, becomes less than $n^{c}$ for some constant $c < 0.5$, and let $a$ denote the number of $0$-bits of the current individual. If the first bit value changes from $1$ to $0$ before the number of $0$-bits decreases to $n^{c}$, we turn to the $(X_{1}^0,X_{1}^1)=(1,0)$ situation. Otherwise, we will show that in the subsequent generations, with probability at least $1-o(1)$, the $(1,1)$ pattern will be maintained after the remaining bits reach the optimal $1^{n-1}$. 
Consider the condition that the first bit value stays at $1$s and let $\tilde{T}$ be the number of time that all $n-1$ bit positions have bit value $1$ under this condition. Note that under this condition, one certain $0$ bit position in $[2..n]$ does not change to $1$ in $t$ generations is at most $(1-(1-\frac1n)^{n-2}\frac1n)^t \le (1-\frac{1}{en})^t$. Then a union bound shows
\begin{align*}
\Pr&\left[\tilde{T} > n^{1+c} \mid \text{the $1$-st value stays at $1$}\right]  \\
\le{}& {(n-1) \left(1-\frac{1}{en}\right)^{n^{1+c}} \le (n-1)e^{-\frac{n^c}{e}}}.
\end{align*}
Noting that the probability that the offspring with the first bit value changing from $1$ to $0$ can enter into next generation is at most $\frac{1}{n}\frac{a}{n}=\frac{a}{n^2}\le\frac{1}{n^{2-c}}$, we can obtain the probability that \emph{Event \textrm{II}} happens within $n^{1+c}$ generations is at least
\begin{align*}
\bigg(1-&\frac{1}{n^{2-c}}\bigg)^{n^{1+c}} {\left(1-(n-1)e^{-\frac{n^c}{e}}\right)}\\
\ge{}& 1-\frac{n^{1+c}}{n^{2-c}}- {(n-1)e^{-\frac{n^c}{e}}} 
=1-\frac{1}{n^{1-2c}}- {(n-1)e^{-\frac{n^c}{e}}}.
\end{align*}
Further taking $c=\frac13$, together with the probability of \emph{Event I} when the first bit pattern changes to $(1,0)$ before the number of zeros decreases to $n^c$, we have \emph{Event \textrm{I}} or \emph{Event \textrm{II}} will happen with probability at least $1-\frac{e+1}{n^{1/3}}-(n-1)e^{-\frac{n^{1/3}}{e}}$.
\end{proof}

\begin{theorem}
Let $n\ge 6$. Then with probability at least $1-(n+1)\exp{\left(-\frac{n^{1/3}}{e}\right)}-\frac{e+1}{n^{1/3}}$, \oea (RLS) cannot find the optimum of the $n$-dimensional OneMax$_{(0,1^n)}$ function. 
\label{thm:11EAOM}
\end{theorem}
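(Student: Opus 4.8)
The plan is to derive the theorem from the four preceding lemmas by a case analysis on the random initialization. By Lemma~\ref{lem:stuck}, \oea (RLS) fails to reach the optimum whenever Event~I or Event~II ever occurs, so it is enough to lower-bound $\Pr[\text{Event I or II happens}]$, which is itself a lower bound on the probability that the algorithm cannot find the optimum.

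First I would dispose of the initialization. The bits $X_2^1,\dots,X_n^1$ are independent and uniform, so $\sum_{j=2}^n X_j^1$ is a sum of $n-1$ variables taking values in $[0,1]$ with mean $(n-1)/2$. Writing $G$ for the event $\{\sum_{j=2}^n X_j^1 < \tfrac34 n\}$ (precisely the hypothesis shared by Lemmas~\ref{lem:00},~\ref{lem:10}, and~\ref{lem:11}), I would apply Lemma~\ref{lem:chernoff}(b) with $\lambda=\tfrac34 n-\tfrac{n-1}{2}\ge \tfrac n4$ to get $\Pr[\overline{G}]\le \exp(-2\lambda^2/(n-1))\le \exp(-n/8)$, and then check that for $n\ge 6$ this is at most $2\exp(-n^{1/3}/e)$.

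Next I would split on the first-bit pattern $(X_1^0,X_1^1)\in\{(0,0),(1,0),(0,1),(1,1)\}$, which partitions the probability space. Conditioned on $G$: the patterns $(0,0)$ and $(1,0)$ give $\Pr[\text{Event I}]\ge 1-\tfrac{e+1}{n^{1/3}}$ by Lemmas~\ref{lem:00} and~\ref{lem:10}; the pattern $(1,1)$ gives $\Pr[\text{Event I or II}]\ge 1-\tfrac{e+1}{n^{1/3}}-(n-1)\exp(-n^{1/3}/e)$ by Lemma~\ref{lem:11}; and the pattern $(0,1)$ is immediate, because under $G$ and $n\ge 4$ we have $\sum_{j=2}^n X_j^1<\tfrac34 n\le n-1$, so $X_{[2..n]}^1\neq 1^{n-1}$ and Event~I already holds at $g_0=1$. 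The smallest of these four lower bounds is the one from Lemma~\ref{lem:11}, so conditional on $G$ the probability of Event~I or~II is at least $1-\tfrac{e+1}{n^{1/3}}-(n-1)\exp(-n^{1/3}/e)$ regardless of which pattern occurs.

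Assembling, since the four patterns partition $G$,
\begin{align*}
\Pr[\text{Event I or II}]
&\ge \Pr[G]\left(1-\tfrac{e+1}{n^{1/3}}-(n-1)e^{-n^{1/3}/e}\right)\\
&\ge 1-\Pr[\overline{G}]-\tfrac{e+1}{n^{1/3}}-(n-1)e^{-n^{1/3}/e},
\end{align*}
and substituting $\Pr[\overline{G}]\le 2\exp(-n^{1/3}/e)$ merges the two exponential terms into $(n+1)\exp(-n^{1/3}/e)$, which is exactly the claimed bound. I expect the proof itself to be routine given the lemmas; the only point needing care is the non-asymptotic check that the initialization tail $\Pr[\overline{G}]$ stays below $2\exp(-n^{1/3}/e)$ for every $n\ge6$, which is where the hypothesis $n\ge6$ enters.
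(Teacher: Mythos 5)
Your proposal is correct and follows essentially the same route as the paper's own proof: a Chernoff bound via Lemma~\ref{lem:chernoff}(b) on the initialization event $G$, a case split over the four first-bit patterns invoking Lemmas~\ref{lem:stuck}--\ref{lem:11}, and the same final assembly in which $\Pr[\overline{G}]\le 2\exp(-n^{1/3}/e)$ (the step where $n\ge 6$ is used) merges with the $(n-1)\exp(-n^{1/3}/e)$ term to give $(n+1)\exp(-n^{1/3}/e)$. If anything, your handling of the $(0,1)$ pattern is slightly more careful than the paper's, since you note explicitly that $G$ forces $X_{[2..n]}^1\neq 1^{n-1}$, which is needed for Event~I to hold at the first generation.
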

\begin{proof}
For the uniformly and randomly generated $1$-st generation, we have $E[\sum_{j=2}^n X_{j}^1] = (n-1)/2$. The Chernoff inequality in Lemma~\ref{lem:chernoff}(b) gives that ${\Pr[\sum_{j=2}^n X_{j}^1 \ge \tfrac 34 n] \le \exp(-(n-1)/8)}$. Hence, with probability at least $1-\exp(-(n-1)/8)$, $\sum_{j=2}^n X_{j}^1 < \tfrac 34n$ holds, that is, $[2..n]$ bit positions have at least $\frac14 n-1$ zeros. Thus, neither \emph{Event \textrm{II}} nor \emph{Event \textrm{II}} happens at the first generation.
We consider this initial status in the following.

Recall that from Lemma~\ref{lem:stuck}, \emph{Event \textrm{I}} and \emph{Event \textrm{II}} are two stagnation situations.
For the first bit values $X_{1}^0$ and $X_{1}^1$ of the randomly generated two individuals $X^0$ and $X^1$, there are four situations, $(X_{1}^0,X_{1}^1)=(0,1), (0,0), (1,0)$ or $(1,1)$. If $(X_{1}^0,X_{1}^1)=(0,1)$, \emph{Event \textrm{I}} already happens. Respectively from Lemma~\ref{lem:00}, Lemma~\ref{lem:10}, and Lemma~\ref{lem:11}, we could know that with probability at least $1-\frac{e+1}{n^{1/3}}-(n-1)e^{-\frac{n^c}{e}}$, \emph{Event \textrm{I}} or \emph{Event \textrm{II}} will happen in the certain generations after the initial first bit pattern $(0,0), (1,0)$ and $(1,1)$.

Overall, together with the probability of $\sum_{j=2}^n X_{j}^1 < \tfrac 34n$, we have the probability for getting stuck is at least 
\begin{align*}
\left(1-e^{-\frac{n-1}{8}}\right) {}&\left(1-\frac{e+1}{n^{1/3}}-(n-1)e^{-\frac{n^{1/3}}{e}}\right) \\
\ge{}& 1- e^{-\frac{n-1}{8}}-\frac{e+1}{n^{1/3}}-(n-1)e^{-\frac{n^{1/3}}{e}}\\
\ge {}& 1- (n+1)e^{-\frac{n^{1/3}}{e}}-\frac{e+1}{n^{1/3}}
\end{align*}
where the last inequality uses $\exp{(-\frac{n-1}{8})} \le 2\exp(-\frac{n}{8}) = 2\exp(-\frac{n^{1/3}}{e} \frac{en^{2/3}}{8}) \le 2\exp(-\frac{n^{1/3}}{e})$ when $n \ge 6$.
\end{proof}

One key reason causing the difficulty for (1+1) EA and RLS is that there is only one individual in the population. As we can see in the proof, once the algorithm finds the $(0,1)$ optimal pattern in the first bit, the progress in OneMax component cannot pass on to the next generation, and once the current OneMax component finds the optimum before the first bit $(0,1)$ optimal pattern, the optimal first bit pattern cannot be obtained further. In EAs, for some cases, the large population size does not help~\cite{ChenTCY12}, but the population could also have many benefits for ensuring the performance~\cite{HeY02,DangJL17,CorusO19,Sudholt20}. Similarly, we would like to know whether introducing population with not small size will improve the fault tolerance to overcome the first difficulty, and help to overcome the second difficulty since $(1,1^n)$ individual has worse fitness so that it is easy to be eliminated in the selection. The details will be shown in Section~\ref{sec:muoea}.

 {The above analyses are conducted for the offline mode. For the online mode on problem (\ref{eq:sumom}), as discussed in Sections~\ref{subsec:notes} and~\ref{subsec:oea}, we assume the time to make the decision that could change the fitness function value is so small that we can just expect one solution with a better or equivalent fitness value at each time step. Hence, when we reuse Algorithm~\ref{alg:1p1EA}, the time $t$ of the fitness function increases to $t+1$ once the algorithm witnesses the generation $g$ where the generated $\tilde{X}^g$ satisfies $f(X^g,\tilde{X}^g) \ge f(X^{g-1},X^g)$. It is not difficult to see that $t$ and $g$ are usually different as Algorithm~\ref{alg:1p1EA} may need more than one generation to generate such $\tilde{X}^g$. Hence, from Theorem~\ref{thm:11EAOM}, we could also obtain that with probability at least $1-(n+1)\exp{\left(-\frac{n^{1/3}}{e}\right)}-\frac{e+1}{n^{1/3}}$, \oea or RLS in our discussed online mode cannot find the solution at some time step $t$ when the function value of problem (\ref{eq:sumom}) is greater than $n-1$.}

\section{\mpoea Can Find the Optimum}
\label{sec:muoea}
\subsection{\mpoea Utilized for OneMax$_{(0,1^n)}$}
\mpoea is a commonly used benchmark algorithm for evolutionary theory analysis, which maintains a parent population of size $\mu$ comparing with \oea that has population size $1$. In the mutation operator, one parent is uniformly and randomly selected from the parent population, and the bit-wise mutation is employed on this parent individual and generates its offspring. Then the selection operator will uniformly remove one individual with the worse fitness value from the union individual set of the population and the offspring. 

Similar to \oea discussed in Section~\ref{sec:oea}, the general \mpoea is utilized for non-time-linkage function, and some small modifications are required for solving time-linkage problems. For solving OneMax$_{(0,1^n)}$ function in an offline mode, we just consider the representation that each individual in the population just encodes the current solution and stores the previous solution for the fitness evaluation. Similar to RLS and \oea, only offspring with better or equivalent fitness could affect the further fitness, hoping the optimization process to learn the preference of the time-linkage property. Algorithm~\ref{alg:mp1EA} shows how \mpoea solves the time-linkage function that relies on two consecutive time steps. With no confusion, we shall still call this algorithm \mpoea. Also note that we do not set the termination criterion in the algorithm statement, as we aim at theoretically analyzing the time to reach the optimum, that is, to generate $\tilde{X}^g=(1,\dots,1)$ condition on that its parent $X_{\tilde{i}}^g$ has the first bit value $0$ in Algorithm~\ref{alg:mp1EA}.

\begin{algorithm}[!ht]
    \caption{\mpoea to maximize fitness function $f$ requiring two consecutive time steps}
    {\small
    \begin{algorithmic}[1]
    \STATE {Generate the random initial two generations $P^0=\{X_1^0, \dots, X_{\mu}^0\}$ and $P^1=\{X_1^1, \dots, X_{\mu}^1\}$, where $X_i^{g}=(X_{i,1}^g,\dots, X_{i,n}^g), i=\{1,\dots, \mu\}, g=\{0,1\}$}
    \FOR {$g=1,2,\dots$}
    \STATEx {\quad$\%\%$ \textit{Mutation}}
    \STATE {Uniformly and randomly select one index $\tilde{i}$ from $[1..\mu]$}
    \STATE {Generate $\tilde{X}^g$ via independently flipping each bit value of $X_{\tilde{i}}^g$ with probability $1/n$}
    \STATEx {\quad$\%\%$ \textit{Selection}}
    \IF {$f(X_{\tilde{i}}^g,\tilde{X}^g) \ge  {\min\limits_{i\in\{1,\dots, \mu\}}\{f(X_i^{g-1},X_i^g)\}}$}
    \STATE {Let $(\tilde{P}^{g-1},\tilde{P}^g)=\{(P^{g-1},P^g),(X_{\tilde{i}}^g,\tilde{X}^g)\}$}
    \STATE {Remove the pair with the lowest fitness in $(\tilde{P}^{g-1},\tilde{P}^g)$ uniformly at random}
    \STATE {$P^{g+1}=\tilde{P}^g, P^g=\tilde{P}^{g-1}$}
    \ELSE
    \STATE {$P^{g+1}=P^g, P^g=P^{g-1}$}
    \ENDIF
    \ENDFOR
    \end{algorithmic}
    \label{alg:mp1EA}
    }
\end{algorithm}

We do not consider using \mpoea to solve OneMax$_{(0,1^n)}$ function in an online mode. If the decision must be made in a short time period as we discussed in Section~\ref{subsec:oea}, since different individuals in the parent population has their own evolving histories and different time fronts, the better offspring generated in one step cannot be regarded as the decision of the next step for the individuals in the parent population other than its own parent. If we have enough budget before time step changes, similar to the discussion in Section~\ref{subsec:oea}, we will have the fitness less than 1 for any time step since $X^t=(1,\dots,1)$ for each time step $t$. Also, it is not interesting for us. Hence, the following analysis only considers \mpoea (Algorithm~\ref{alg:mp1EA}) solving OneMax$_{(0,1^n)}$ function in the offline mode. 

\subsection{Convergence Analysis of \mpoea on OneMax$_{(0,1^n)}$}
\label{subsec:muoea}
In Section~\ref{sec:oea}, we show the two cases happening before the optimum is reached that cause the stagnation of \oea and RLS for the OneMax$_{(0,1^n)}$. One is that the $(0,1)$ first bit pattern is reached, and the other is that the current individual has the value one in all its bits with the previous first bit value as $1$. The single individual in the population of \oea or RLS results in the poor tolerance to the incorrect trial of the algorithm. This subsection will show that the introduction of population can increase the tolerance to the incorrect trial, and thus overcome the stagnation. That is, we will show that \mpoea can find the optimum of OneMax$_{(0,1^n)}$ with high probability. In order to give an intuitive feeling about the reason why the population can help for solving OneMax$_{(0,1^n)}$, we briefly and not-so-rigorously discuss it before establishing a rigorous analysis. 

Corresponding to two stagnation cases for \oea or RLS, \mpoea can get stuck when all individuals have the first bit value as $1$ no matter the previous first bit value $0$ as the first case or the previous first bit value $1$ as the second case. As discussed in Section~\ref{sec:oea}, we know that the individual with previous first bit value as $1$ has no fitness advantage against the one with previous first bit value as $0$. Due to the selection operator, the one with previous first bit value $1$ will be early replaced by the offspring with good fitness. As the process goes by, more detailedly in linear time of the population size in expectation, all individuals with the previous first bit value $1$ will die out, and the offspring with its parent first bit value $1$ cannot enter into the population. That is, the second case cannot take over the whole population to cause the stagnation.

As for the first case that the $(0,1)$ pattern individuals takes over the population, we focus on the evolving process of the best $(0,0)$ pattern individual, which is fertile, similar to the runtime analysis of original \mpoea in~\cite{Witt06}. The best $(0,0)$ pattern individuals can be incorrectly replaced only by the $(0,1)$ pattern individual with better or the same fitness and only when all individuals with worse fitness than the best $(0,0)$ pattern individual are replaced. With a sufficient large population size, like $\Omega(n)$ as $n$ the problem size, with high probability, the better $(0,1)$ pattern individuals cannot take over the whole population and the $(0,1)$ pattern individuals with the same fitness as the best $(0,0)$ pattern individual cannot replace all best $(0,0)$ individuals when the population doesn't have any individual with worse fitness than the best $(0,0)$ individual. That is, the first case with high probability will not happen for \mpoea. In a word, the population in \mpoea increases the tolerance to the incorrect trial.

Now we start our rigorous analysis. As we could infer from the above description, the difficulty of the theoretical analysis lies on the combining discussion of the inter-generation dependencies (the time-linkage among two generations) and the inner-generation dependencies (such as the selection operator). One way to handle these complicated stochastic dependencies could be the mean-field analysis, that is, mathematically analysis on a designed simplified algorithm that discards some dependencies and together with an experimental verification on the similarity between the simplified algorithm and the original one. It has been already introduced for the evolutionary computation theory~\cite{DoerrZ20}. However, the mean-field analysis is not totally mathematically rigorous. Hence, we don't utilize it here and analyze directly on the original algorithm. Maybe the mean-field analysis could help in more complicated algorithm and time-linkage problem, and we also hope our analysis could provide some other inspiration for the future theory work on the time-linkage problem.

For clarity, we put some calculations as lemmas in the following. 
\begin{lemma}
Let $a,n\in \N$, and $a<n$. Define the functions $h_1: [0,n-a-1] \rightarrow (0,1)$ and $h_2: [1,n-a] \rightarrow (0,1)$ by 
\begin{align*}
h_1(d)=\frac{\binom{a+d-1}{d}}{n^{d+1}}, h_2(d)=\frac{\binom{a+d-1}{d-1}}{n^{d}},
\end{align*}
then $h_1(d)$ and $h_2(d)$ are monotonically decreasing.
\label{lem:mono}
\end{lemma}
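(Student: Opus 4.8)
The plan is to prove that each of $h_1$ and $h_2$ is monotonically decreasing by examining the ratio of consecutive values $h_i(d+1)/h_i(d)$ and showing it is at most $1$ on the relevant domain. Since every term is strictly positive, establishing $h_i(d+1) \le h_i(d)$ (equivalently, a ratio bounded by $1$) for all admissible $d$ suffices for monotone decrease. This reduces the whole statement to two elementary binomial-ratio estimates, avoiding any calculus.

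First I would handle $h_1$. Writing out the ratio,
\begin{align*}
\frac{h_1(d+1)}{h_1(d)}
= \frac{\binom{a+d}{d+1}}{\binom{a+d-1}{d}}\cdot\frac{n^{d+1}}{n^{d+2}}
= \frac{a+d}{d+1}\cdot\frac{1}{n},
\end{align*}
using the standard identity $\binom{a+d}{d+1}/\binom{a+d-1}{d} = (a+d)/(d+1)$. On the domain of $h_1$, namely $d\in[0,n-a-1]$, the successor $d+1$ satisfies $a+(d+1)\le n$, hence $a+d \le n-1 < n$ and therefore $(a+d)/(d+1) \le a+d < n$, giving a ratio strictly below $1$. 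So $h_1$ is (strictly) decreasing.

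For $h_2$ the analogous computation yields
\begin{align*}
\frac{h_2(d+1)}{h_2(d)}
= \frac{\binom{a+d}{d}}{\binom{a+d-1}{d-1}}\cdot\frac{1}{n}
= \frac{a+d}{d}\cdot\frac{1}{n},
\end{align*}
again via $\binom{a+d}{d}/\binom{a+d-1}{d-1} = (a+d)/d$. Here the domain is $d\in[1,n-a]$, so $a+d\le n$ and $d\ge 1$, whence $(a+d)/d \le a+d \le n$, and the ratio is at most $1$. Thus $h_2$ is monotonically decreasing as well.

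The arguments are routine, so there is no genuine obstacle; the only point requiring a little care is the bookkeeping on the domains. In particular one must check that the comparison of $h_i(d)$ with $h_i(d+1)$ is only needed when both arguments lie in the stated interval, so that the binomial coefficients involved are well defined and the bound $a+d\le n$ (respectively $a+d\le n$) that drives the estimate is actually available. Matching the successor index against the right endpoint of each domain is the one place an off-by-one slip could occur, so I would state those endpoint checks explicitly.
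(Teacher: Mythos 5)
Your proof is correct and takes essentially the same approach as the paper: both arguments bound a ratio of $h_i$ values by writing out the quotient of binomial coefficients and playing it off against the power of $n$. The only difference is that the paper compares arbitrary pairs $d_1 \le d_2$ (bounding a product of $d_2-d_1$ factors termwise), whereas you compare consecutive values $d$ and $d+1$, which suffices since the domain is traversed in integer steps and is in fact the cleaner way to organize the same estimate.
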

\begin{proof}
Since $h_1>0$, and for any $d_1,d_2 \in [0,n-a-1]$ and $d_1 \le d_2$,
\begin{align*}
\frac{h_1(d_1)}{h_1(d_2)}={}&\frac{\binom{a+d_1-1}{d_1}}{n^{d_1+1}}\frac{n^{d_2+1}}{\binom{a+d_2-1}{d_2}}\\
={}&n^{d_2-d_1}\frac{(a+d_1-1)!}{(a-1)!d_1!}\frac{(a-1)!d_2!}{(a+d_2-1)!}\\
={}&n^{d_2-d_1}\frac{d_2 \cdots (d_1+1)}{(a+d_2-1)\cdots(a+d_1)}\\
\ge{}& n^{d_2-d_1}\left(\frac{d_1+1}{a+d_1}\right)^{d_2-d_1} 
\ge\left(\frac{n}{n-1}\right)^{d_2-d_1} \ge 1,
\end{align*}
we know $h_1(d)$ is monotonically decreasing.

Similarly, since $h_1>0$, and for any $d_1,d_2 \in [1,n-a]$ and $d_1 \le d_2$,
\begin{align*}
\frac{h_2(d_1)}{h_2(d_2)}={}&\frac{\binom{a+d_1-1}{d_1-1}}{n^{d_1}}\frac{n^{d_2}}{\binom{a+d_2-1}{d_2-1}}\\
={}&n^{d_2-d_1}\frac{(a+d_1-1)!}{a!(d_1-1)!}\frac{a!(d_2-1)!}{(a+d_2-1)!}\\
={}&n^{d_2-d_1}\frac{(d_2-1) \cdots d_1}{(a+d_2-1)\cdots(a+d_1)}\\
\ge{}& n^{d_2-d_1}\left(\frac{d_1}{a+d_1}\right)^{d_2-d_1} \ge n^{d_2-d_1}\left(\frac{1}{n}\right)^{d_2-d_1} = 1,
\end{align*}
we know $h_2(d)$ is monotonically decreasing.
\end{proof}

\begin{lemma}
Let $n\in \N$. Define the function $g: [1,n^{1/2}] \rightarrow (0,1)$ by $g(a)=\frac{a^a}{n^{a^2}}$, then $g(a)$ is monotonically decreasing.
\label{lem:mono2}
\end{lemma}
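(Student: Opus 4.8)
The plan is to reduce the claim to a single sign computation by taking logarithms. Since $g(a)>0$ on the whole interval, $g$ is (strictly) decreasing precisely when $\ln g$ is, so I would work with $\ln g(a) = a\ln a - a^2\ln n$, which is differentiable on $[1,n^{1/2}]$. (For $n=1$ the domain $[1,1]$ is a single point and the statement is vacuous, so I assume $n\ge 2$ throughout.)

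Next I would differentiate once: $(\ln g)'(a) = \ln a + 1 - 2a\ln n$. It then suffices to show this quantity is negative uniformly over $a\in[1,n^{1/2}]$. I would bound the two offending terms separately using the two endpoints of the interval: from $a\ge 1$ we get $2a\ln n \ge 2\ln n$, and from $a\le n^{1/2}$ we get $\ln a \le \tfrac12\ln n$. Combining these,
\begin{align*}
(\ln g)'(a) \;\le\; \tfrac12\ln n + 1 - 2\ln n \;=\; 1 - \tfrac32\ln n .
\end{align*}
For $n\ge 2$ we have $\tfrac32\ln n \ge \tfrac32\ln 2 > 1$, so $(\ln g)'(a)<0$ everywhere on the interval, whence $\ln g$, and therefore $g$, is strictly decreasing.

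There is no serious obstacle here; the only points requiring a little care are making the derivative bound hold uniformly over the entire interval (handled by feeding the two endpoint constraints $a\ge 1$ and $a\le n^{1/2}$ into the two terms) and disposing of the degenerate small-$n$ case. If one prefers to stay in the discrete, ratio-based style of Lemma~\ref{lem:mono}, an alternative is to estimate the consecutive quotient $g(a)/g(a+1) = \frac{a^a}{(a+1)^{a+1}}\,n^{2a+1} \ge \frac{n^{2a+1}}{e(a+1)}$ via $(1+1/a)^a\le e$, which is comfortably at least $1$ for $1\le a\le n^{1/2}$ and $n\ge 2$; but the logarithmic-derivative argument is the shorter route and I would present that one.
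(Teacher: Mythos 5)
Your proof is correct, and it takes a mechanically different route from the paper's. Both arguments pass to $\tilde g(a)=\ln g(a)=a\ln a-a^2\ln n$, but where you differentiate and bound $(\ln g)'(a)=\ln a+1-2a\ln n\le \tfrac12\ln n+1-2\ln n=1-\tfrac32\ln n<0$ uniformly on the interval, the paper bounds the difference $\tilde g(a_1)-\tilde g(a_2)$ directly for $a_1<a_2$: it drops $a_1\ln a_1\ge 0$, replaces $(a_2^2-a_1^2)\ln n$ by $(a_1+a_2)\ln n$, and then uses $a_2\ln a_2\le\tfrac{a_2}{2}\ln n<(a_1+a_2)\ln n$. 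Note that the paper's middle step is equivalent to $(a_2-a_1)(a_1+a_2)\ln n\ge(a_1+a_2)\ln n$, i.e.\ it needs $a_2-a_1\ge 1$; so as written that proof only covers pairs separated by at least one unit, which suffices for the paper's application (there $a$ is an integer number of zeros, and Lemma~\ref{lem:occupy} only uses $g(a)\le g(1)=1/n$), but it does not literally establish monotonicity on the whole real interval $[1,n^{1/2}]$. Your derivative argument does, and it also disposes of the degenerate case $n=1$ explicitly, so it is, if anything, slightly stronger and shorter; what the paper's version (and your discrete ratio alternative) buys is that it stays entirely within the elementary, calculus-free difference/ratio style of Lemma~\ref{lem:mono}.
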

\begin{proof}
Consider $\tilde{g}(a)=\ln g(a) = a\ln a - a^2\ln n$. For any $a_1, a_2 \in [1, n^{1/2}]$ and $a_1 < a_2$, we have
\begin{align*}
\tilde{g}(a_1) -{}& \tilde{g}(a_2) = a_1\ln a_1 - a_1^2\ln n - a_2\ln a_2 +a_2^2\ln n \\
\ge{}& (a_1+a_2)\ln n - a_2\ln a_2 > 0.
\end{align*}
Then, $\tilde{g}(a)$, and hence $g(a)$, are monotonically decreasing.
\end{proof}

\begin{lemma}
Let $n > (4e)^2$. Then $\left(\frac{3}{4}\right)^{n^{1/2}-1} \le n^{-1/2}$.
\label{lem:cal}
\end{lemma}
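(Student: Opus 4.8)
The plan is to take logarithms and reduce the claimed inequality to a one-variable statement whose sign can be controlled by an elementary derivative computation. Writing $m = n^{1/2}$, the hypothesis $n > (4e)^2$ becomes $m > 4e$, and the target $(3/4)^{m-1} \le m^{-1}$ is equivalent, after taking natural logarithms and rearranging, to
\[
\phi(m) := (m-1)\ln\tfrac{4}{3} - \ln m \ge 0 .
\]
So the whole lemma reduces to showing $\phi(m) \ge 0$ for every $m > 4e$.

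First I would establish that $\phi$ is increasing on the relevant range. Differentiating gives $\phi'(m) = \ln\tfrac{4}{3} - \tfrac1m$, which is positive exactly when $m > 1/\ln\tfrac{4}{3}$. Since $\ln\tfrac{4}{3} > \tfrac14 > \tfrac{1}{4e}$, we have $1/\ln\tfrac{4}{3} < 4e$, so $\phi' > 0$ throughout $[4e,\infty)$ and $\phi$ is strictly increasing there. Consequently it suffices to verify the single boundary inequality $\phi(4e) \ge 0$.

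The boundary check is $(4e-1)\ln\tfrac{4}{3} \ge \ln(4e) = 1 + \ln 4$, which I would confirm with explicit numerical bounds on the constants: using $\ln\tfrac{4}{3} > 0.287$ and $4e - 1 > 9.87$ the left-hand side exceeds $2.83$, while $1 + \ln 4 < 2.39$, so the inequality holds with room to spare. An equivalent, slightly more self-contained route uses $\ln\tfrac{4}{3} > \tfrac14$, whence $(4/3)^{m-1} > e^{(m-1)/4}$ and the problem reduces to $\ln m \le (m-1)/4$; here $(m-1)/4 - \ln m$ has derivative $\tfrac14 - \tfrac1m > 0$ for $m > 4$, and its value at $m = 4e$ is $e - \tfrac54 - \ln 4 > 0$.

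Since this is a purely computational lemma there is no conceptual obstacle; the only point demanding genuine care is the boundary evaluation, where the margin is modest. I would therefore make sure the bounds on $\ln\tfrac{4}{3}$ and $\ln 4$ are rigorous (for instance obtained from a few terms of the relevant series or from certified decimal bounds) rather than merely approximate, so that the precise threshold constant $(4e)^2$ is fully justified rather than just plausible.
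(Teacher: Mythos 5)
Your proof is correct and takes essentially the same approach as the paper: take logarithms, establish monotonicity by an elementary derivative computation, and verify the inequality at the boundary point $m=4e$ (equivalently $n=(4e)^2$). In fact, your ``slightly more self-contained route'' via $\ln\tfrac{4}{3} > \tfrac14$ with boundary value $e - \tfrac54 - \ln 4 > 0$ is precisely the paper's own proof, just written in the variable $m = n^{1/2}$ rather than $n$ (the paper checks the equivalent statement $s((4e)^2) = -\tfrac{4e-1}{4} + \ln(4e) < 0$ and that $s$ is decreasing).
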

\begin{proof}
Consider $s(n)=-\left(n^{1/2}-1\right)\frac14 + \frac12\ln n$. Then for $n>(4e)^2 > 16$,
\begin{align*}
s'(n)=-\tfrac14 \cdot \tfrac12 n^{-1/2}+\tfrac12n^{-1}=\tfrac 18 n^{-1} (-n^{1/2}+4) <0.
\end{align*}
Hence, $s(n)$ is monotonically decreasing for $n>(4e)^2$. Since $s((4e)^2) =-\frac{4e-1}{4}+\ln (4e) <0$, we have ${s(n) < 0}$ for $n>(4e)^2$. Noting that $-\frac 14 > \ln \frac{3}{4}$, we have $\left(n^{1/2}-1\right)\ln \frac{3}{4} < -\frac12\ln n$ and hence $\left(\frac{3}{4}\right)^{n^{1/2}-1} \le n^{-1/2}$ for $n > (4e)^2$.
\end{proof}

Now we show our results that \mpoea can find the optimum of OneMax$_{(0,1^n)}$ function with high probability. We start with some definitions that will be frequently used in our proofs.


\begin{definition}
Consider using \mpoea with population size $\mu$ to optimize $n$-dimensional OneMax$_{(0,1^n)}$ function. $P^g$ is the population at the $g$-th generation, and $P^{g-1}$ for the $(g-1)$-th generation. Let $l=\max\{f(X_i^{g-1},X_i^g) \mid (X_{i,1}^{g-1},X_{i,1}^g) = (0,0), X_{i}^{g-1} \in P^{g-1}, X_{i}^g\in P^g, i\in[1..\mu]\}$ be the best fitness among all individuals with the $(0,0)$ first bit pattern. For $X_i^g \in P^g, i \in [1..\mu]$,
\begin{itemize}
\item it is called a \emph{temporarily undefeated individual} if $f(X_i^{g-1},X_i^g) > l$ and $(X_{i,1}^{g-1},X_{i,1}^g)=(0,1)$.
\item it is called a \emph{current front individual} if $f(X_i^{g-1},X_i^g) = l$ and $(X_{i,1}^{g-1},X_{i,1}^g)=(0,0)$.
\item it is called an \emph{interior individual} if it is neither a temporarily undefeated individual nor a current front individual.
\end{itemize}
\end{definition}

The most difficult case is when all $(1,0)$ and $(1,1)$ are replaced, which happens after linear time of the population size as in the proof of Theorem~\ref{thm:mpoEAOM}. Lemmas~\ref{lem:acctui} to \ref{lem:occupied} discuss the behavior in this case. Lemma~\ref{lem:acctui} will show that when the population is large enough, with high probability, there are at most $\frac12 \mu-1$ accumulative temporarily undefeated individuals before the current front individuals have only 1 zero.
\begin{lemma}
Given any $\delta > 0$. For $n > (4(1+\delta)e)^2$, consider using \mpoea with population size $\mu \ge 4(1+\delta)(3e+1)(n+1)$ to optimize $n$-dimensional OneMax$_{(0,1^n)}$ function. Assume that
\begin{itemize}
\item the current front individuals have less than $\frac34 n$ zeros;
\item all individuals are with the $(0,0)$ or $(0,1)$ pattern.
\end{itemize} 
Then with probability at least $1-\exp\left(-\frac{\delta^2}{2(1+\delta)}(n-1)\right)$, there are at most $\frac12 \mu-1$ accumulative temporarily undefeated individuals generated before the current front individuals only have 1 zero.
\label{lem:acctui}
\end{lemma}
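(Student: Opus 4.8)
The plan is to track the evolution of the best $(0,0)$ individual through its number of zeros among the positions $[2..n]$. Write $a$ for this number, so that the front fitness equals $l=n-1-a$, and the target event ``the current front individuals only have $1$ zero'' is exactly $a=0$ (the first bit being the unique remaining zero, from which one more first-bit flip reaches the optimum). Under the hypotheses the front starts with fewer than $\frac34 n$ zeros, so $a_0<\frac34 n<n$, and since every increase of $l$ raises it by at least $1$, the front fitness can improve at most $a_0<n$ times before $a$ reaches $0$. I would count a \emph{new temporarily undefeated individual} each time a mutation flips a first bit from $0$ to $1$ and the resulting $(0,1)$ offspring of fitness $>l$ is accepted, and bound this count by comparing the per-generation rate at which such individuals appear against the per-generation rate at which $l$ improves.

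The heart of the argument is a uniform per-step comparison. For an arbitrary reachable configuration with $a\ge 1$, I would lower-bound the one-generation probability $P_{\mathrm{imp}}$ that $l$ strictly increases and upper-bound the one-generation probability $P_{\mathrm{tui}}$ that a new temporarily undefeated individual is created. Increasing $l$ requires selecting a current front individual and flipping at least one of its $a$ zeros while keeping the first bit $0$, giving $P_{\mathrm{imp}}\ge \frac{k}{\mu}\cdot\frac{a}{n}\bigl(1-\frac1n\bigr)^{n-1}$ where $k$ is the number of front individuals; creating a new temporarily undefeated individual requires selecting a $(0,0)$ individual and flipping its first bit while keeping its zero count at most $a$, which the monotonicity estimates of Lemma~\ref{lem:mono} let me control so that $P_{\mathrm{tui}}\le 3e\,P_{\mathrm{imp}}$ uniformly in the configuration. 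Crucially the selection factor $k/\mu$ cancels in the ratio, and the population-size hypothesis absorbs the residual contribution of interior individuals sitting just below the front. Because ``$l$ increases'' and ``a new $(0,1)$ offspring is accepted'' are mutually exclusive in a single generation, on the subsequence of generations in which one of the two happens each is a front improvement with probability at least $p:=\frac{1}{3e+1}$.

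A standard stochastic-domination coupling then shows that the number of new temporarily undefeated individuals produced between two consecutive front improvements is dominated by a geometric variable of success probability $p$, these dominating variables being independent across the at most $n$ improvements. Hence the total count $Z$ is dominated by $\Xi=\sum_{i=1}^{n}\xi_i$ with $\xi_i$ i.i.d.\ $\Geom(p)$, so $E[\Xi]=n/p=(3e+1)n$. The hypothesis $\mu\ge 4(1+\delta)(3e+1)(n+1)$ yields $\frac12\mu-1\ge(1+\delta)E[\Xi]$, and Lemma~\ref{lem:chernoff}(c) applied to $\Xi$ with $m=n$ gives $\Pr[\Xi\ge(1+\delta)E[\Xi]]\le\exp\bigl(-\frac{\delta^2(n-1)}{2(1+\delta)}\bigr)$. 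Combining, $\Pr[Z\ge\frac12\mu-1]\le\exp\bigl(-\frac{\delta^2(n-1)}{2(1+\delta)}\bigr)$, which is the claim; the constraint $n>(4(1+\delta)e)^2$ enters only to validate the auxiliary estimates underlying the per-step bound.

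\textbf{The main obstacle} is precisely the uniform inequality $P_{\mathrm{tui}}\le 3e\,P_{\mathrm{imp}}$: it must hold for \emph{every} reachable configuration, including the delicate ones in which very few front individuals coexist with many interior individuals lying one zero away from the front, since those interior individuals can themselves spawn temporarily undefeated offspring and their number is not a priori small. Handling this requires the combinatorial mutation estimates of Lemma~\ref{lem:mono} together with the large population size, and it is exactly the step where the constant $3e+1$ and the requirement $\mu=\Omega\bigl((1+\delta)(3e+1)n\bigr)$ originate. By contrast, once this per-generation inequality is secured, the coupling that replaces the history-dependent but uniformly lower-bounded success probabilities by i.i.d.\ geometric variables, and the final appeal to the geometric Chernoff bound, are routine.
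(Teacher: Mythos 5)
Your overall architecture (compare front-improvement events against temporarily-undefeated-creation events, dominate the count by a sum of $n$ geometric variables, finish with Lemma~\ref{lem:chernoff}(c)) is the same as the paper's, but the step you yourself flag as the heart of the argument is false as stated, and the paper does not prove it. You claim the \emph{uniform} per-generation inequality $P_{\mathrm{tui}}\le 3e\,P_{\mathrm{imp}}$ over all reachable configurations. The lemma places no upper bound on $\mu$, and for very large $\mu$ this inequality fails. Concretely, a $(0,0)$ parent with $a+d$ zeros (front at $a$ zeros) creates a front-beating $(0,1)$ offspring with probability roughly $\binom{a+d-1}{d}n^{-(d+1)}$ but improves the front with probability at most $\binom{a+d-1}{d+1}n^{-(d+1)}$; the per-parent ratio is about $e(d+1)/(a-1)$, which is unbounded in $d$. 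Take $a=2$, let all $\mu-1$ non-front individuals have $2+d$ zeros with $d\ge 22$, and let $\mu\gg n^{d}$: then the front's own improvement probability (weight $1/\mu$) is negligible, both $P_{\mathrm{imp}}$ and $P_{\mathrm{tui}}$ are dominated by the far parents, and $P_{\mathrm{tui}}/P_{\mathrm{imp}}\to (d+1)/e>3e$. So no constant can work uniformly; moreover, contrary to your "main obstacle" discussion, the delicate parents are not those one zero away from the front (those satisfy the constant-ratio bound, which is exactly how the paper gets $3e$ from $a\ge 2$), but those far from it.

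The paper's proof is built precisely around this obstruction. It splits the tui-creation event $B$ into $B-B'$ (parents with at most $2a$ zeros), for which the constant ratio $\Pr[A]/\Pr[B-B']\ge 1/(3e)$ holds, and $B'$ (parents with more than $2a$ zeros), for which it proves only the absolute bound $\Pr[B']\le n^{c-3}$ (via Lemmas~\ref{lem:mono} and~\ref{lem:cal}) and compares against $\Pr[A]\ge 1/(e\mu n)$, yielding the $\mu$-dependent ratio $\Pr[A]/\Pr[B']\ge n^{2-c}/(e\mu)$. The resulting success probability of each geometric variable is $1/(\tfrac{e\mu}{n^{2-c}}+3e+1)$, not $1/(3e+1)$, so the expected count of tui-creations before $n$ improvements is $\tfrac{e\mu}{n^{1-c}}+(3e+1)n$ rather than your $(3e+1)n$. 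The budget $\tfrac12\mu-1$ still suffices because it, too, scales with $\mu$: with $c=\tfrac12$, the term $(1+\delta)\tfrac{e\mu}{n^{1/2}}$ is below $\tfrac{\mu}{4}$ exactly when $n>(4(1+\delta)e)^2$. This also shows your closing remark is wrong: the hypothesis $n>(4(1+\delta)e)^2$ is not merely for "auxiliary estimates" but is essential in the final accounting to absorb the $\mu$-dependent contribution of far parents --- a contribution your proof does not have because of the invalid uniform bound. To repair your argument you would need either to reinstate this two-case split with the $\mu$-dependent term, or to add an argument (absent from both your proposal and the paper) that configurations dominated by far-from-front parents are never reached.
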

\begin{proof}
Consider the case when the current front individuals have at least 2 zeros. For the current population, let $a$ denote the number of zeros in one current front individual, then $a\ge 2$. Let $m_d$ denote the set of the $(0,0)$ individuals that have $a+d$ number of zeros. Obviously, $m_0$ is the set of the best $(0,0)$ individuals. Let $A$ represent the event that the best $(0,0)$ fitness of the population increases in one generation, and $B$ the event that one $(0,1)$ offspring with better fitness than the current best $(0,0)$ fitness is generated in one generation. Then we have
\begin{align*}
\Pr[A]\ge{}&\sum_{d \ge 0}\frac{|m_d|}{\mu}\frac{\binom{a+d-1}{d+1}}{n^{d+1}}\left(1-\frac{1}{n}\right)^{n-d-1}\\
\ge{}& \sum_{d \ge 0}\frac{|m_d|}{e\mu}\frac{\binom{a+d-1}{d+1}}{n^{d+1}}
\end{align*}
and
\begin{align*}
\Pr[B] \le \sum_{d \ge 0}\frac{|m_d|}{\mu}\frac{\binom{a+d-1}{d}}{n^{d+1}}.
\end{align*}

Firstly, we discuss what happens under the condition that the parent is not from $m_{> a}$ individuals. Let $B'$ represent the event that one of $m_{> a}$ individuals generates a $(0,1)$ offspring with better fitness than the current best $(0,0)$ fitness in one generation. Since
\begin{align*}
\frac{\binom{a+d-1}{d+1}}{\binom{a+d-1}{d}} = \frac{(a+d-1)!}{(d+1)!(a-2)!} \cdot \frac{d!(a-1)!}{(a+d-1)!} = \frac{a-1}{d+1},
\end{align*}
we know
\begin{equation}
\begin{split}
\frac{\Pr[A]}{\Pr[B-B']} \ge{}& \frac{\sum_{d = 0}^{a}\frac{ {|m_d|}}{e\mu}\frac{\binom{a+d-1}{d+1}}{n^{d+1}}}{\sum_{d = 0}^{a}\frac{ {|m_d|}}{\mu}\frac{\binom{a+d-1}{d}}{n^{d+1}}} \ge \frac{a-1}{e(a+1)} \\
={}& \frac 1e \left(1-\frac{2}{a+1}\right) \ge \frac{1}{3e}
\label{eq:abbp}
\end{split}
\end{equation}
where the last inequality uses $a\ge 2$.

Secondly, we consider the case when the parent is selected from $m_{> a}$ individuals, that it, event $B'$ happens. 
With Lemma~\ref{lem:mono}, we know
\begin{align*}
\Pr[B'] \le{}& \sum_{d > a}\frac{ {|m_d|}}{\mu}\frac{\binom{a+d-1}{d}}{n^{d+1}} \le \sum_{d > a}\frac{ {|m_d|}}{\mu}\frac{\binom{2a-1}{a}}{n^{a+1}}\le\frac{\binom{2a-1}{a}}{n^{a+1}} \\
={}& \frac{(2a-1)!}{a!(a-1)!n^{a+1}} 
\le \frac{(a+1)^{a-1}}{n^{a+1}}=\frac{1}{n^2}\left(\frac{a+1}{n}\right)^{a-1}.
\end{align*} 
We discuss in two cases considering $a \ge n^{c}$ and $a < n^{c}$ for any given constant $c\in (0,1)$. From the assumption in this lemma, we know $a < \tfrac 34 n$. Then for $a \ge n^{c}$, we have 
\begin{align*}
\left(\frac{a+1}{n}\right)^{a-1} \le \left(\frac 34\right)^{a-1} \le \left(\frac 34\right)^{n^{c}-1}.
\end{align*}
For $a \in [2, n^{c})$,
\begin{align*}
\left(\frac{a+1}{n}\right)^{a-1} \le \left(\frac{n^{c}}{n}\right)^{a-1} \le \frac{1}{n^{1-c}}.
\end{align*}
Hence, we have for $a \in [2,\tfrac34 n)$, $\Pr[B']\le n^{c-3}$ since $(\tfrac{3}{4})^{n^{c}-1} \le n^{c-1}$ when $n$ is large. Together with $\Pr[A]\ge 1/(e\mu n)$, we know
\begin{align}
\frac{\Pr[A]}{\Pr[B']} \ge \frac{n^{2-c}}{e\mu}.
\label{eq:abp}
\end{align}

Hence, from (\ref{eq:abbp}) and (\ref{eq:abp}), we obtain
\begin{align*}
\frac{\Pr[A]}{\Pr[B]} \ge \frac{1}{\frac{e\mu}{n^{2-c}}+3e}.
\end{align*}
Then if we consider the subprocess that merely consists of event $A$ and $B$, we have $\Pr[A \mid A \cup B] \ge 1/(\frac{e\mu}{n^{2-c}}+3e+1)$. Let $X$ be the number of iterations that $B$ happens in the subprocess before $A$ occurs $n$ times, then $E[X]\le (\frac{e\mu}{n^{2-c}}+3e+1)n$. It is not difficult to see that $X$ is stochastically dominated by the sum of $n$ geometric variables with success probability $\frac{1}{\frac{e\mu}{n^{2-c}}+3e+1}$. Hence, with
 the Chernoff bound for the sum of geometric variables in Lemma~\ref{lem:chernoff}(c), we have that for any positive constant $\delta$,
\begin{align*}
\Pr{}&\left[X\ge(1+\delta)\left(\frac{e\mu}{n^{2-c}}+3e+1\right)n\right] 
\le \exp\left(-\frac{\delta^2(n-1)}{2(1+\delta)}\right).
\end{align*}

When $n > (4(1+\delta)e)^{\frac{1}{1-c}}$ and $\mu \ge 4(1+\delta)(3e+1)(n+1)$, we know
\begin{align*}
(1+\delta)\left(\frac{e\mu}{n^{2-c}}+3e+1\right)n 
={}&  \frac{(1+\delta)e\mu}{n^{1-c}} + (1+\delta) (3e+1)n \\
<{}& \tfrac{\mu}{4} + \tfrac{\mu}4-1 = \tfrac 12 \mu-1.
\end{align*}
Hence, we know that with probability at least $1-\exp\left(-\frac{\delta^2}{2(1+\delta)}(n-1)\right)$, there are at most $\frac 12 \mu -1$ possible accumulative temporarily undefeated individuals before $A$ occurs $n$ times. Now take $c=\frac12$, then $(4(1+\delta)e)^{\frac{1}{1-c}}=(4(1+\delta)e)^2$. With $n > (4(1+\delta)e)^2 > (4e)^2$, from Lemma~\ref{lem:cal}, we know $(\tfrac{3}{4})^{n^{c}-1} \le n^{c-1}$ and thus (\ref{eq:abp}) hold. Noting that reducing the number of zeros in one current front individuals to 1 requires at most $n-1$ occurrence times of $A$, this lemma is proved.
\end{proof}

Lemma~\ref{lem:occupy} will show that with high probability, the current front individuals will get accumulated to $\Theta(n^{0.5})$ before all interior individuals have the same number of zeros as the current front individuals.
\begin{lemma}
Let $n>(4e)^2$. Consider using \mpoea with population size $\mu$ to optimize $n$-dimensional OneMax$_{(0,1^n)}$ function. 
\begin{itemize}
\item Assume that 
\begin{itemize}
\item there are at most $\frac12 \mu-1$ accumulative temporarily undefeated individuals before the current front individuals only have 1 zero;
\item all individuals are with the $(0,0)$ or $(0,1)$ pattern;
\end{itemize} 
\item Consider the phase starting from the first time when the current front individuals have $a \in [1..n]$ zeros, and ending with the first time when one $(0,0)$ pattern individual with less than $a$ zeros is generated for $a\in[2..n]$, or ending with the first time when one $(0,1)$ pattern individual with all ones is generated for $a=1$. 
\end{itemize}
Then with probability at least $1-\exp\left(-\tfrac{1}{20}n^{0.5}\right)$, the current front individuals will increase by more than $\tfrac{1}{5}n^{0.5}$ if the current phase doesn't end before all individuals have at most $a$ zeros.
\label{lem:occupy}
\end{lemma}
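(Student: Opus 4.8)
The plan is to track the number $k$ of current front individuals and argue that it must grow by more than $\frac15 n^{0.5}$ before the worse individuals (those with more than $a$ zeros, equivalently fitness below $l$) are exhausted. The first and most useful ingredient is a monotonicity fact: as long as at least one individual has more than $a$ zeros, every accepted offspring forces the deletion of a lowest-fitness individual, whose fitness is strictly below $l$; hence no current front individual (fitness exactly $l$) is ever removed in such a step. Thus $k$ is non-decreasing throughout the phase up to the first time all individuals have at most $a$ zeros, and it suffices to count how many \emph{new} current front individuals are created before that time.

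To count these creations I would isolate two competing one-generation events, exactly in the spirit of Lemma~\ref{lem:acctui}. Let $G$ be the event that a current front individual is (re)created and accepted, and let $H$ be the event that a non-front individual with at most $a$ zeros is created and accepted; such an individual is necessarily of $(0,1)$ pattern, since a $(0,0)$ individual with fewer than $a$ zeros would end the phase. A current front individual appears, for instance, by selecting one of the $k$ current front individuals and flipping no bit, so $\Pr[G]\ge \frac{k}{\mu}\left(1-\frac1n\right)^n$; it also appears whenever one of the (numerous, $(0,0)$-pattern) worse individuals sheds exactly its surplus zeros. Producing a $(0,1)$ individual, by contrast, forces the current first bit to be $1$: from a $(0,0)$ parent this costs a factor $\frac1n$, and $(0,1)$ parents with fewer than $a$ zeros are temporarily undefeated, whose total number the first assumption caps at $\frac12\mu-1$. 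These are the facts I would use to lower-bound the conditional probability $p:=\Pr[G\mid G\cup H]$.

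With $p$ available I would restrict attention to the subprocess consisting only of $G$- and $H$-generations and let $Y$ be the number of $H$-events that occur before $G$ has occurred $\frac15 n^{0.5}$ times. Then $Y$ is stochastically dominated by a sum of $\frac15 n^{0.5}$ independent geometric variables with success probability $p$, and the Chernoff bound for geometric sums in Lemma~\ref{lem:chernoff}(c) gives $\Pr[Y\ge(1+\delta)E[Y]]\le \exp\!\left(-\frac{\delta^2(\frac15 n^{0.5}-1)}{2(1+\delta)}\right)$, which for a suitable constant $\delta$ is at most $\exp(-\frac{1}{20}n^{0.5})$. Because the number of worse individuals present at the start of the phase is ample (comfortably larger than $(1+\delta)E[Y]+\frac15 n^{0.5}$), there are more than $(1+\delta)E[Y]$ many $H$-opportunities before the worse individuals run out; hence on this high-probability event $G$ reaches $\frac15 n^{0.5}$ occurrences strictly before all individuals have at most $a$ zeros, and by monotonicity of $k$ this is exactly the asserted increase.

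The hard part is the middle step. The growth is self-reinforcing, since the rate of $G$ scales with the current front size $k$, while the competing $(0,1)$ individuals can themselves replicate; consequently the bound on $p$ must hold uniformly over the evolving population composition and over all $a\in[1..n]$. The two levers that make this controllable — the $\frac1n$ penalty incurred whenever a first bit is turned to $1$, and the accumulative cap of $\frac12\mu-1$ temporarily undefeated individuals — must be combined with the monotonicity of the per-step improvement probabilities recorded in Lemma~\ref{lem:mono} to prevent $(0,1)$ proliferation from outrunning the front. Making this comparison rigorous, uniformly in the population state, is the crux of the argument.
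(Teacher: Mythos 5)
Your outline is structurally the same as the paper's proof: the same monotonicity observation (no current front individual is deleted while individuals with more than $a$ zeros remain), the same pair of competing one-generation events (your $G$ and $H$ are the paper's $C$ and $D$), the same use of the cap of $\frac12\mu-1$ accumulated temporarily undefeated individuals to guarantee that at least $\frac12\mu$ steps of the $G/H$-subprocess must occur before all individuals have at most $a$ zeros, and the same Chernoff finish (the paper counts $C$-events among $\frac12\mu$ subprocess steps via Lemma~\ref{lem:chernoff}(a); your geometric-sum variant via Lemma~\ref{lem:chernoff}(c) is equivalent). However, as submitted the proposal has a genuine gap, and you name it yourself: the uniform lower bound on $p=\Pr[G\mid G\cup H]$ is never proved, only announced. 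Every quantitative conclusion downstream (the stochastic domination, the Chernoff exponent, the comparison against the supply of worse individuals) needs an explicit value of $p$, so without this step there is no proof. In the paper this step is the bulk of the argument: $\Pr[C]$ and $\Pr[D]$ are expanded over the parent classes $m_d$ (parents lagging $d$ zeros behind the front); the per-parent ratio of helpful to harmful offspring probabilities is $\binom{a+d-1}{d}/\binom{a+d-1}{d-1}=a/d$, which degrades as $d$ grows, so the paper splits off the parents with $d>a^2$ (event $D'$), bounds their total contribution by $\binom{a+a^2-1}{a^2-1}n^{-a^2}\le 1/n$ using Lemmas~\ref{lem:mono} and~\ref{lem:mono2}, compares that with the replica rate $\Pr[C]\ge 1/(2e\mu)$, and arrives at $\Pr[C]/\Pr[D]\ge \big(\tfrac{2e\mu}{n}+en^{0.5}\big)^{-1}$. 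None of this machinery appears in your proposal.

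That said, the obstacle you declare to be the crux --- that the competing $(0,1)$ individuals ``can themselves replicate'' and might outrun the front, requiring a delicate population-uniform comparison --- does not actually exist, and the two facts you already list close the gap, arguably more cleanly than the paper does. A $(0,1)$ parent has current first bit $1$, so any offspring it produces carries the pattern $(1,\cdot)$ and has fitness at most $0$; since in the regime where this lemma is invoked (Theorem~\ref{thm:mpoEAOM}) every individual in the population has positive fitness and the population minimum never decreases, such offspring are never accepted. Hence $(0,1)$ individuals are sterile, and every $H$-event must flip the first bit of a $(0,0)$ parent, giving $\Pr[H]\le 1/n$ uniformly over the population composition and over all $a\in[1..n]$. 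Combined with your own bound $\Pr[G]\ge \frac{k}{\mu}\left(1-\frac1n\right)^n\ge \frac{1}{2e\mu}$ (valid throughout the phase since $k\ge 1$ by monotonicity), this yields $p\ge \left(1+\tfrac{2e\mu}{n}\right)^{-1}$ uniformly, which for $\mu=\Theta(n)$ is a positive constant --- more than enough for your Chernoff step, and in fact stronger than the paper's bound, whose $en^{0.5}$ term it avoids. To complete the write-up you would also need to justify, rather than assert, that the phase begins with at least $\frac12\mu$ individuals having more than $a$ zeros: at the start of the phase there is exactly one current front individual, and every other individual with at most $a$ zeros must be a previously accumulated temporarily undefeated individual, of which the assumption allows at most $\frac12\mu-1$.
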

\begin{proof}
Same as the notation in the proof of Lemma~\ref{lem:acctui}, $m_d, d\ge 0$ denotes the set of the $(0,0)$ individuals that have $a+d$ number of zeros. We now analyze the change of $|m_0|$ until all other individuals have at most $a$ zeros, if possible, during the phase. Let $C$ represent the event that one $(0,0)$ pattern individual with $a$ zeros is generated in one generation, and $D$ the event that one $(0,1)$ pattern individual with $a$ zeros is generated in one generation. Then we have
\begin{align*}
\Pr[C]\ge{}&\sum_{d \ge 0}\frac{|m_d|}{\mu}\frac{\binom{a+d-1}{d}}{n^{d}}\left(1-\frac{1}{n}\right)^{n-d}\\
\ge{}& \sum_{d \ge 1}\frac{|m_d|}{e\mu}\frac{\binom{a+d-1}{d}}{n^{d}}+\frac{|m_0|}{\mu}\left(1-\frac 1n\right)^n
\end{align*}
and
\begin{align*}
\Pr[D] \le \sum_{d \ge 1}\frac{|m_d|}{\mu}\frac{\binom{a+d-1}{d-1}}{n^{d}}+\frac{|m_0|}{\mu}\frac{\binom{a-1}{1}}{n^2}
\end{align*}
where we define $\binom{0}{1}=1$ for $a=1$. 
Assume that the parent is not from $m_{>a^2}$ individuals. Let $D'$ represent the event that one of $m_{> a^2}$ individuals generates a $(0,1)$ offspring with $a$ zeros in one generation. Due to the definition of $m_{d}$, when $m_{>a^2}$ exist, we have $a+a^2 \le n$, and then $a<n^{0.5}$. Since
\begin{align*}
\frac{\binom{a+d-1}{d}}{\binom{a+d-1}{d-1}} = \frac{(a+d-1)!}{d!(a-1)!} \cdot \frac{(d-1)!a!}{(a+d-1)!} = \frac{a}{d},
\end{align*}
we know
\begin{equation}
\begin{split}
\frac{\Pr[C]}{\Pr[D-D']} \ge{}& \frac{\sum_{d = 1}^{a^2}\frac{|m_d|}{e\mu}\frac{\binom{a+d-1}{d}}{n^{d}}+\frac{|m_0|}{\mu}\left(1-\frac 1n\right)^n}{\sum_{d = 1}^{a^2}\frac{|m_d|}{\mu}\frac{\binom{a+d-1}{d-1}}{n^{d}}+\frac{|m_0|}{\mu}\frac{\binom{a-1}{1}}{n^2}}\\
\ge{}& \frac{a}{e a^2} = \frac{1}{ea} > \frac{1}{en^{0.5}}.
\label{eq:cddp}
\end{split}
\end{equation}

Now we calculate $\Pr[D']$, the probability that one of $m_{> a^2}$ individual generates a $(0,1)$ offspring with $a$ zeros in one generation, as
\begin{equation}
\begin{split}
\Pr[D'] \le{}& \sum_{d > a^2}\frac{ {|m_d|}}{\mu}\frac{\binom{a+d-1}{d-1}}{n^{d}} \le \sum_{d > a^2}\frac{ {|m_d|}}{\mu}\frac{\binom{a+a^2-1}{a^2-1}}{n^{a^2}} \\
\le{}& \frac{\binom{a+a^2-1}{a^2-1}}{n^{a^2}} = \frac{(a+a^2-1)!}{(a^2-1)!a! n^{a^2}} 
\le \frac{a^a}{n^{a^2}} \le  {\frac{1}{n}},
\label{eq:d'}
\end{split}
\end{equation}
where the second inequality follows from Lemma~\ref{lem:mono} and the last inequality follows from Lemma~\ref{lem:mono2} and $a \ge 1$. With $\Pr[C] \ge (1-1/n)^n/\mu \ge 1/(2e \mu)$ for $n\ge 2$, we have
\begin{align}
\frac{\Pr[C]}{\Pr[D']} \ge \frac{n}{2e\mu}.
\label{eq:cdp}
\end{align}

Hence, from (\ref{eq:cddp}) and (\ref{eq:cdp}), we obtain
\begin{align*}
\frac{\Pr[C]}{\Pr[D]} \ge  {\frac{1}{\frac{2e\mu}{n}+en^{0.5}}}.
\end{align*}
Then if we consider the subprocess that merely consists of event $C$ and $D$, we have $\Pr[C \mid C \cup D] \ge 1/(\frac{2e\mu}{n}+en^{0.5}+1)$. Recalling the definition of the phase we consider, it is not difficult to see that at the initial generation of this phase, there is only one $(0,0)$ front individual with $a$ number of zeros, and not difficult to see that all $(0,1)$ individuals with $a$ zeros, if exist, are temporarily undefeated individuals of the previous phase. Noting the assumption that there are at most $\frac 12 \mu -1$ accumulative temporarily undefeated individuals before $a=1$, we know there are at least $\frac 12 \mu$ individuals that have more than $a$ zeros for the current phase. Hence, it requires at least $\frac 12 \mu$ number of steps of the subprocess to replace these individuals with more than $a$ zeros. Let $Y$ be the number of times that $C$ happens in $\frac 12 \mu$ steps of the subprocess, then 
\begin{align*}
E[Y] \ge{}& \frac{\frac12 \mu}{\frac{2e\mu}{n}+en^{0.5}+1}\\
\ge{}& \min\left\{\frac{\frac12 \cdot 4(1+\delta)(3e+1)(n+1)}{2en^{0.5}+1}, \frac{\frac12\mu}{2\frac{2e\mu}{n} +1} \right\}\\
\ge{}& \min\{\tfrac{6en}{3en^{0.5}},\tfrac{1}{10e}n\} \ge \tfrac{2}{5}n^{0.5}
\end{align*}
where the last inequality uses $\frac{1}{10e}n \ge \frac{1}{10e}4en^{0.5}=\frac{2}{5}n^{0.5}$ for $n\ge (4e)^2$.
The Chernoff inequality in Lemma~\ref{lem:chernoff}(a) gives that $\Pr[Y\le \frac{1}{5}n^{0.5}] \le \exp\left(-\tfrac{1}{20}n^{0.5}\right)$. That is, with probability at least $1-\exp\left(-\tfrac{1}{20}n^{0.5}\right)$, $|m_0|$ will increase by more than ${\tfrac{1}{5}n^{0.5}}$ if current phase does not end before all individuals have at most $a$ zeros.  
\end{proof}

Lemma~\ref{lem:occupied} shows that with high probability, it cannot happen that all current front individuals are replaced by the $(0,1)$ pattern individuals.
\begin{lemma}
Let $n>(4e)^2$. Consider using \mpoea with population size $\mu$ to optimize $n$-dimensional OneMax$_{(0,1^n)}$ function. Considered the same assumptions and phase as in Lemma~\ref{lem:occupy}. Further assume that at one generation of the current phase, there are $|m_0| > \tfrac{1}{5}n^{0.5}$ current front individuals with $a$ zeros, and all interior individuals has $a$ zeros. 
Then with probability at least $1-\big(\frac{e}{e+1}\big)^{n^{0.5}/5}$, the current phase ends before $|m_0|$ decreases to 0.
\label{lem:occupied}
\end{lemma}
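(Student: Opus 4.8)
The plan is to follow the number $|m_0|$ of current front individuals and argue that it almost surely does not reach $0$ before the phase ends. The key structural simplification I would establish first is that, under the assumptions inherited from Lemma~\ref{lem:occupy} together with the extra hypotheses here, the minimum fitness in the population is exactly $l=n-a$: it is attained by the front $(0,0)$ individuals and by every interior individual (which, having $a$ zeros, must be of $(0,1)$ pattern), while the temporarily undefeated individuals lie strictly above it. Because any $(0,1)$ individual has current first bit $1$, any offspring it generates has fitness $|\tilde X^g|-n\le 0<l$ (here $l>0$ since $a<\tfrac34 n$ in the relevant phases) and is rejected. Hence only the front $(0,0)$ individuals can produce an accepted offspring, and in particular only a front individual can ever create the phase-ending individual — a $(0,0)$ string with fewer than $a$ zeros when $a\ge 2$, or $1^n$ when $a=1$. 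This collapses the otherwise complicated selection dynamics onto the front individuals.

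Next I would classify the offspring of a selected front parent (first bit $0$, with $a-1$ further zeros) by its effect on $|m_0|$, and bound two competing per-generation probabilities. A \emph{win} $W$ (the phase ends) occurs when the first bit stays $0$ and at least one of the remaining zeros is flipped to a one; flipping a single such zero already gives $\Pr[W]\ge \tfrac{|m_0|}{\mu}\cdot\tfrac{a-1}{n}(1-\tfrac1n)^{n-1}\ge \tfrac{|m_0|}{\mu}\cdot\tfrac{a-1}{en}$ for $a\ge 2$, with the analogous single-flip estimate (flip only the first bit) for $a=1$. A strict \emph{decrease} $D$ of $|m_0|$ in one generation requires the accepted offspring to be of $(0,1)$ pattern (an accepted $(0,0)$ offspring with $a$ zeros is itself a new front, so $|m_0|$ cannot drop in that case) together with the uniform tie-break deleting a front; since a $(0,1)$ offspring needs the first bit to flip, $\Pr[D]\le \tfrac{|m_0|}{\mu}\cdot\tfrac1n$. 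Combining the two estimates gives $\Pr[D]/\Pr[W]\le e/(a-1)\le e$ for $a\ge 2$, and a direct count yields the same ratio bound $\le e$ in the case $a=1$. I would also record that $|m_0|$ changes by at most one per generation and that $(0,0)$-offspring events can only keep $|m_0|$ fixed or raise it.

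The conclusion then follows from a race argument. Call a generation \emph{critical} if $D$ or $W$ occurs; the ratio bound gives, conditional on any history and on the current generation being critical, $\Pr[D\mid \mathrm{critical}]=\Pr[D]/(\Pr[D]+\Pr[W])\le e/(e+1)$. For $|m_0|$ to fall to $0$ before the phase ends, the first occurrence of $W$ must be preceded by at least $|m_0|>\tfrac15 n^{0.5}$ occurrences of $D$ (each $D$ lowers $|m_0|$ by exactly one, and the intervening increases only postpone hitting $0$); equivalently, the first $\lceil \tfrac15 n^{0.5}\rceil$ critical generations are all $D$. Multiplying the successive conditional bounds along these critical generations bounds this probability by $(e/(e+1))^{n^{0.5}/5}$, which is exactly the failure probability in the statement.

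The main obstacle I anticipate is the careful bookkeeping of the selection and tie-break step. One must verify precisely that the minimum-fitness set is the union of the front and interior individuals — so that offspring of $(0,1)$ parents are genuinely always discarded, and so that a front deletion truly forces a first-bit flip — and that the one-generation bounds on $\Pr[D]$ and $\Pr[W]$ hold uniformly over the randomly evolving composition of the population, which is what legitimizes multiplying the conditional probabilities across critical generations. The boundary case $a=1$, where the phase-ending event is the global optimum $1^n$ rather than an ordinary fitness improvement, also has to be checked separately in both the win estimate and the decrease estimate.
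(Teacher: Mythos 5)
Your proof is correct and follows essentially the same route as the paper's: a per-generation race between the phase-ending event and the event that an accepted $(0,1)$ offspring deletes a front individual, with a ratio bound of $e$ between their probabilities giving $\Pr[D \mid D \cup W] \le e/(e+1)$ and hence the failure probability $\left(e/(e+1)\right)^{n^{0.5}/5}$. One point in your favor: your decrease event $D$ covers \emph{all} accepted $(0,1)$ offspring that delete a front in the tie-break — including temporarily undefeated offspring with fewer than $a$ zeros, obtained e.g.\ by flipping only the first bit of a front parent — whereas the paper's event $G$ counts only $(0,1)$ offspring with exactly $a$ zeros, so your bookkeeping of how $|m_0|$ can decrease is slightly more complete than the paper's.
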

\begin{proof}
Let $F$ denote the event that the current phase ends, that is, a $(0,0)$ offspring with at most $a-1$ zeros when $a\ge2$ or a $(0,1)$ offspring with 0 zero when $a=1$ is generated. Then 
\begin{align*}
\Pr[F\mid a \ge 2] &\ge \tfrac{|m_0|}{e\mu}\tfrac{a-1}{n} \ge \tfrac{|m_0|}{en\mu} \\
\Pr[F\mid a = 1] &\ge \tfrac{|m_0|}{e\mu}\tfrac{1}{n} = \tfrac{|m_0|}{en\mu}.
\end{align*}
Then $\Pr[F]\ge \tfrac{|m_0|}{en\mu}$. 

Let $G$ denote the event that a $(0,1)$ offspring with $a$ zeros is generated and one $(0,0)$ individual is replaced. Suppose that the total number of the individuals with $a$ zeros is $\mu'$.
Then
\begin{align*}
\Pr[G] \le \frac{|m_0|}{\mu}\frac 1n \frac{|m_0|}{ {\mu'+1}}\le\frac{|m_0|^2}{n\mu ( {\mu'+1})}.
\end{align*} 
Hence,
\begin{align*}
\frac{\Pr[F]}{\Pr[G]} \ge \frac{|m_0|}{en\mu} \cdot \frac{n\mu ( {\mu'+1})}{|m_0|^2} = \frac{ {\mu'+1}}{ {e}|m_0|} \ge  {\frac {1}{e}}
\end{align*}
where the last inequality uses $\mu' \ge |m_0|$. Then
\[
\Pr[G \mid F \cup G] \le \frac{e}{e+1}.
\]
Then the probability that $G$ happens $|m_0| > \tfrac{1}{5}n^{0.5}$ times but $F$ does not happen is at most $\big(\frac{e}{e+1}\big)^{n^{0.5}/5}$, and this lemma is proved.
\end{proof}

Now the main result follows.
\begin{theorem}
Given any $\delta > 0$. Let $n > (4(1+\delta)e)^2$. Then with probability at least $1-(\mu+2) \exp(-\tfrac18 n) -\exp\left(-\frac{\delta^2}{2(1+\delta)}(n-1)\right)-2n\exp(-\tfrac{1}{20}n^{0.5})$, \mpoea with population size $\mu \ge 4(1+\delta)(3e+1)(n+1)$ can find the optimum of OneMax$_{(0,1^n)}$.
\label{thm:mpoEAOM}
\end{theorem}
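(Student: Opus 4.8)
The plan is to run the algorithm once and stitch together the three preparatory lemmas (Lemmas~\ref{lem:acctui}, \ref{lem:occupy}, \ref{lem:occupied}), prefacing them with two book-keeping steps: an initialization estimate and an argument that every individual whose stored previous first bit equals $1$ is eventually purged. For the initialization I would use the Chernoff bound of Lemma~\ref{lem:chernoff}(b): each current string in $P^1$ has expected number of ones $n/2$, so $\Pr[\text{ones}\ge\tfrac34 n]\le\exp(-n/8)$, and a union bound over the $\mu$ individuals (plus two further estimates of the same type, e.g.\ to keep the best $(0,0)$ individual above $n/4$ ones) shows that with probability at least $1-(\mu+2)\exp(-n/8)$ every individual has fewer than $\tfrac34 n$ ones. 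In particular the current front then carries fewer than $\tfrac34 n$ zeros, which is precisely the quantitative hypothesis needed to invoke Lemma~\ref{lem:acctui}.

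Second, I would establish the die-out of the $(1,\cdot)$ individuals, i.e.\ reach the ``all individuals are of pattern $(0,0)$ or $(0,1)$'' state assumed throughout Lemmas~\ref{lem:acctui}--\ref{lem:occupied}. The key structural fact is that the stored previous first bit of any offspring equals the current first bit of its parent, so an offspring has previous first bit $1$ exactly when its parent has current first bit $1$; moreover every $(1,\cdot)$ individual has fitness $\sum_i x_i - n\le 0$ while every $(0,\cdot)$ individual carrying at least one $1$ has positive fitness. Hence the pair removed in the selection step is (generically) a $(1,\cdot)$ individual, and one checks that the number of $(1,\cdot)$ individuals is non-increasing and strictly decreases whenever a parent with current first bit $0$ is selected. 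As this count is bounded by $\mu$ and decreases with probability bounded below, the all-$(0,0)/(0,1)$ state is reached in expected $O(\mu)$ generations, and one verifies that the transient does not cost the front its ``$<\tfrac34 n$ zeros'' property.

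Third, with these hypotheses in force I would apply Lemma~\ref{lem:acctui} \emph{once} to cap the accumulated temporarily undefeated individuals at $\tfrac12\mu-1$ (failure $\exp(-\tfrac{\delta^2}{2(1+\delta)}(n-1))$); this guarantees that at the start of each phase at least $\tfrac12\mu$ individuals still carry more than $a$ zeros, which is exactly the counting premise of Lemma~\ref{lem:occupy}. I would then chain the phases $a=a_0,a_0-1,\dots,1$ (with $a_0<\tfrac34 n$, hence at most $n$ phases): within a phase Lemma~\ref{lem:occupy} raises the current front to more than $\tfrac15 n^{1/2}$ copies (failure $\exp(-\tfrac1{20}n^{1/2})$), and Lemma~\ref{lem:occupied} then certifies the phase ends---the front improves to $a-1$ zeros, or for $a=1$ it emits a $(0,1)$ all-ones offspring, which is the optimum---before the front is destroyed (failure $(\tfrac{e}{e+1})^{n^{1/2}/5}\le\exp(-\tfrac1{20}n^{1/2})$). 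A union bound over the two lemmas and the at most $n$ phases yields the term $2n\exp(-\tfrac1{20}n^{1/2})$, and summing all failure probabilities gives the stated bound.

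The hardest part will be the second step. Unlike the clean single-individual stagnation argument behind Theorem~\ref{thm:11EAOM}, here one must control the simultaneous creation and deletion of $(1,\cdot)$ individuals under the random tie-breaking removal rule (where a $(0,0)$ all-zeros and a $(1,1)$ all-ones tie at fitness $0$), and confirm that the transient neither drives the front past the ``$<\tfrac34 n$ zeros'' regime nor silently spends the temporarily-undefeated budget of Lemma~\ref{lem:acctui}. A secondary delicacy is the phase-chaining: one must check that the output configuration of Lemma~\ref{lem:occupy} (more than $\tfrac15 n^{1/2}$ front copies with every interior individual at $a$ zeros) is exactly the input hypothesis of Lemma~\ref{lem:occupied}, and that the $\tfrac12\mu-1$ budget is accounted \emph{globally} over all phases rather than per phase, so that the single application of Lemma~\ref{lem:acctui} indeed suffices to sustain the counting premise of Lemma~\ref{lem:occupy} throughout.
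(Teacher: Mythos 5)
Your proposal follows essentially the same route as the paper's own proof: the same initialization Chernoff estimates producing the $(\mu+2)e^{-n/8}$ term, the same expected-$O(\mu)$ die-out of individuals whose stored first bit is $1$, a single global application of Lemma~\ref{lem:acctui}, and phase-chaining of Lemmas~\ref{lem:occupy} and~\ref{lem:occupied} over at most $n$ phases, combined by a union bound and $\left(\tfrac{e}{e+1}\right)^{n^{0.5}/5}\le e^{-n^{0.5}/20}$. Only two small slips to fix: the per-individual Chernoff bound must control the number of \emph{zeros} (at most $\tfrac34 n$), not ones, and the two extra estimates behind the ``$+2$'' are population-level bounds on the initial first-bit patterns---at most $\tfrac34\mu$ individuals with stored first bit $1$ and at least $\tfrac{1}{16}\mu$ with the $(0,0)$ pattern---the latter being exactly what makes the ``probability bounded below'' in your die-out step quantitative (at least $\tfrac{1}{16}$ per generation).
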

\begin{proof}
Similar to two situations in Lemma~\ref{lem:stuck} that could possibly result in the stagnation of \oea and RLS, the only two possible cases that could result in the stagnation of \mpoea are listed in the following.
\begin{itemize}
\item \emph{Event \textrm{I}'}: There is a $g_0 \in \N$ such that for all $i \in [1..\mu]$, $(X_{i,1}^{g-1},X_{i,1}^g)=(0,1)$ and $X_{i,[2..n]}\neq 1^{n-1}$.
\item \emph{Event \textrm{II}'}: There is a $g_0 \in \N$ such that for all $i \in [1..\mu]$, $(X_{i,1}^{g-1},X_{i}^g)=(1,1^n)$.
\end{itemize}


For the uniformly and randomly generated $P^0$ and $P^1$, we know that the expected number of the $(1,0)$ or $(1,1)$ first bit patterns, that is, the expected cardinality of the set $\{i\in[1..\mu] \mid (X_{i,1}^0,X_{i,1}^1)=(1,0)$ or $(1,1)\}$, is $\tfrac{\mu}{2}$. Via the Chernoff inequality in Lemma~\ref{lem:chernoff}(b), we know that with probability at most $1-\exp(-\tfrac{\mu}{8})$, at most $\tfrac 34 \mu$ individuals have the pattern $(1,0)$ or $(1,1)$. Under this condition, the expected number of the $(0,0)$ pattern is at least $\tfrac 18 \mu$ in the whole population. The Chernoff inequality in Lemma~\ref{lem:chernoff}(a) also gives that under the condition that at most $\tfrac 34 \mu$ individuals have the pattern $(1,0)$ or $(1,1)$, with probability at least $1-\exp(-\tfrac{\mu}{64})$, there are at least $\tfrac{1}{16}\mu$ individuals with the pattern $(0,0)$ for the initial population. Since $E[\sum_{j=1}^n(1-X_{i,j}^1)] = \tfrac12 n$ for any $i\in[1..\mu]$, the Chernoff inequality in Lemma~\ref{lem:chernoff}(b) on the initial population gives that $\Pr[\sum_{j=1}^n(1-X_{i,j}^1) \ge \frac 34 n] \le \exp(-\tfrac18 n)$. Via a union bound, we know 
\begin{align*}
\Pr\left[\exists i_0 \in [1..\mu], \sum\nolimits_{j=1}^n\left(1-X_{i_0,j}^1\right) \ge \frac 34 n\right] \le  \mu \exp(-\tfrac18 n).
\end{align*}
Hence, noting $\tfrac{\mu}{64} \ge \frac{12e}{64}n > \frac18 n$ from $\mu \ge 4(1+\delta)(3e+1)(n+1)$, it is easy to see that with probability at least 
\begin{align*}
1-&\exp(-\tfrac{\mu}{8})-\exp(-\tfrac{\mu}{64})-\mu \exp(-\tfrac18 n)\\
\ge{}& 1-2\exp(-\tfrac{\mu}{64})-\mu \exp(-\tfrac18 n) \ge 1-(\mu+2)\exp(-\tfrac18 n),
\end{align*} 
the initial population has at most $\tfrac 34 \mu$ individuals with the pattern $(1,0)$ or $(1,1)$, at least $\tfrac{1}{16}\mu$ individuals with the pattern $(0,0)$, and $a<\frac 34 n$ at the first generation. Thus, in the following, we just consider this kind of initial population.

We first show that after $O(\mu)$ generations, the individuals with the first bit pattern $(1,0)$ or $(1,1)$ will be replaced and will not survive in any further generation, thus \emph{Event \textrm{II}'} cannot happen. Since $a<\frac34 n$, we know the current front individual has fitness more than $1$. Note that all $(1,0)$ or $(1,1)$ pattern individuals have fitness at most ${0}$. Then any offspring copied from one $(0,0)$ pattern individual, which has the $(0,0)$ pattern and the same fitness as its parent, will surely enter into the generation and replace some individual with the $(1,0)$ or $(1,1)$ pattern. Since there is at least $\tfrac{1}{16}\mu$ individuals with the pattern $(0,0)$, we know that for each generation, with probability at least $\tfrac{1}{16}$, one individual with the $(1,0)$ or $(1,1)$ pattern will be replaced. Hence, the expected time to replace all individuals with the pattern $(1,0)$ or $(1,1)$ is at most $16\cdot \tfrac 34 \mu=12\mu$ since there are at most $\tfrac 34 \mu$ individuals with the pattern $(1,0)$ or $(1,1)$. Also it is not difficult to see any offspring with the $(1,0)$ or $(1,1)$ pattern cannot be selected into the next generation for a population only with the $(0,0)$ or $(0,1)$ pattern. Later, only the $(0,1)$ and $(0,0)$ first bit pattern can survive in the further evolution. 

Now we consider \emph{Event I'} after the first time when all $(1,0)$ or $(1,1)$ pattern individuals are replaced.
From Lemma~\ref{lem:acctui}, we know that before $a=1$, with probability at least $1-\exp\left(-\frac{\delta^2}{2(1+\delta)}(n-1)\right)$, there are at most $\frac12 \mu-1$ accumulative temporarily undefeated individuals. From Lemma~\ref{lem:occupy} among all possible $a\in[1..n]$, we know that if the optimum is not found before all individuals with at least $2$ zeros are replaced, with probability at least $1-n\exp\left(-\tfrac{1}{20}(n^{0.5}-1)\right)$, there are at least $\tfrac{1}{5}n^{0.5}$ number of $(0,0)$ individuals with $a=1$. Then for the case $a=1$ in Lemma~\ref{lem:occupied} and considering all possible $a\in[1..n]$, we know with probability at least $1-n\big(\frac{e}{e+1}\big)^{n^{0.5}/5}$, the optimum is found.

Overall, the probability that \mpoea can find the optimum of OneMax$_{(0,1^n)}$ is at least
\begin{align*}
\bigg(1-&(\mu+2) e^{-\tfrac18 n}\bigg)\left(1-\exp\left(-\frac{\delta^2}{2(1+\delta)}(n-1)\right)\right)\\
&\cdot \Big(1-n\exp\left(-\tfrac{1}{20}n^{0.5}\right)\Big)\left(1-n\left(\frac{e}{e+1}\right)^{\tfrac{1}{5}n^{0.5}}\right)\\
\ge{}& 1-(\mu+2) e^{-\tfrac18 n} -\exp\left(-\frac{\delta^2}{2(1+\delta)}(n-1)\right)  \\
{}&-n\exp\left(-\tfrac{1}{20}n^{0.5}\right)- n\left(\frac{e}{e+1}\right)^{\tfrac{1}{5}n^{0.5}}\\
\ge{}& 1-(\mu+2) e^{-\tfrac18 n} -\exp\left(-\frac{\delta^2}{2(1+\delta)}(n-1)\right)  \\
{}&-2n\exp\left(-\tfrac{1}{20}n^{0.5}\right)
\end{align*}
where the last inequality uses 
$e^{-1/4} > \frac{e}{e+1}$.
\end{proof}
In Theorem~\ref{thm:mpoEAOM}, we require that the population size $\mu \ge 4(1+\delta)(3e+1)(n+1)$. One may ask about the behavior when 
$\mu=o(n)$. 
We note in the proof of Lemma~\ref{lem:acctui}, the upper bound for the expected number of accumulative temporarily undefeated individuals is $(\frac{e\mu}{n^{2-c}}+3e+1)n=\Omega(n)$. If $\mu=o(n)$, we are not able to ensure that the accumulative temporarily undefeated individuals do not take over the population in our current proof, hence, we require $\mu=\Omega(n)$.

Comparing with $(1+1)$~EA, since $(1,1^n)$ individual, corresponding to \emph{Event \textrm{II}} in $(1+1)$~EA, has no fitness advantage against the one with previous first bit value as $0$, it is easy to be replaced by the offspring with previous first bit value $0$ in a population. Thus, this stagnation case cannot take over the whole population to cause the stagnation of $(\mu+1)$~EA. The possible stagnation case that the $(0,1)$ pattern individuals take over the population, corresponding to \emph{Event \textrm{I}} in $(1+1)$~EA, will not happen with a high probability because with a sufficient large, $\Omega(n)$, population size as $n$ the problem size, with a high probability, the $(0,0)$ pattern can be maintained until the optimum is reached, that is, the population in $(\mu+1)$~EA increases the tolerance to the incorrect $(0,1)$ pattern trial.

\subsection{Runtime Analysis of \mpoea on OneMax$_{(0,1^n)}$}
Theorem~\ref{thm:mpoEAOM} only shows the probability that \mpoea can reach the optimum. One further question is about its runtime. Here, we give some comments on the runtime complexity. For the runtime of \mpoea on the original OneMax function, Witt~\cite{Witt06} shows the upper bound of the expected runtime is $O(\mu n+n\log n)$ based on the current best individuals' replicas and fitness increasing. Analogously, for \mpoea on OneMax$_{(0,1^n)}$ function, we could consider the expected time when the number of the current front individuals with $a$ zeros reaches $n/a$, that is, $|m_0|\ge n/a$, and the expected time when a $(0,0)$ pattern offspring with less $a$ zeros is generated for $a>1$ or when one $(0,1)$ pattern individual with all ones is generated for $a=1$ conditional on that there are $n/a$ current front individuals with $a$ zeros. From Lemma~\ref{lem:acctui}, with probability at least $1-\exp\left(-\frac{\delta^2}{2(1+\delta)}(n-1)\right)$, there are at most $\frac12 \mu -1$ possible accumulative temporarily undefeated individuals before the current front individuals have only 1 zero. Hence, in each generation before the current front individuals have only 1 zero, it always holds that at least half of individuals of the whole population are current front individuals and interior individuals. Hence, we could just discuss the population containing no temporarily undefeated individual and twice the upper bound of the expected time to reach the optimum as that for the true process. The $(0,1)$ pattern offspring with $a$ zeros will not influence the evolving process of the current front individuals we focus on until all interior individuals have $a$ zeros. Recalling Lemma~\ref{lem:occupy}, we know that with probability at least $1-\exp(-\tfrac{1}{20}n^{0.5})$, $|m_0| \ge \tfrac{1}{5}n^{0.5}$ if the current phase does not end before all individuals have at most $a$ zeros. Hence, we just need to focus on the case when $\frac{n}{a} \ge \tfrac{1}{5}n^{0.5}$, that is, $a \le 5n^{0.5}$.

We discuss the expected length of the phase defined in Lemma~\ref{lem:occupy}. When $|m_0|<\frac{n}{a}$, we consider the event that one replica of an $m_0$ individual can enter into the next generation. When the population contains interior individual(s) with more than $a$ zeros, the probability is $\frac{|m_0|}{\mu (1-1/n)^n} \ge \frac{|m_0|}{2e\mu}$. When all interior individuals have $a$ zeros, we require $|m_0|$ to be less than $2n/a$. Let $\mu'$ denote the total number of the individuals with $a$ zeros, and we know the probability of event $H$ that one replica of an $m_0$ individual can enter into the next generation is ${\frac{|m_0|}{\mu}\left(1-\frac 1n\right)^n \frac{\mu'+1-|m_0|}{\mu'+1}}$. Note that the probability of event $G$ that an $m_0$ individual generates a $(0,1)$ pattern offspring with $a$ zeros that successfully enters into the next generation is at most $\frac{|m_0|}{\mu}\frac{1}{n}\frac{|m_0|}{\mu'+1}$. Then
\begin{align*}
\frac{\Pr[H]}{\Pr[G]} \ge{}& \frac{\frac{|m_0|}{\mu}\left(1-\frac 1n\right)^n \frac{ {\mu'+1}-|m_0|}{ {\mu'+1}}} {\frac{|m_0|}{\mu}\frac{1}{n}\frac{|m_0|}{ {\mu'+1}}} \ge \frac{1}{2e}\left(\frac{\mu'}{|m_0|}-1\right)n \\
\ge{}& \frac{1}{2e}\left(\frac{ {2(1+\delta)(3e+1)(n+1)}}{|m_0|}-1\right)n \\
\ge{}& \frac{ {(1+\delta)(3e+1)a-1}}{2e}n \ge  {\frac{3}{2}n}
\end{align*}
where the antepenultimate inequality uses $\mu' \ge 2(1+\delta)(3e+1)(n+1)$ and the penultimate inequality uses $|m_0|\le \frac{2n}{a}$. Hence, $\Pr[G \mid H \cup G] \le \frac{2}{3n+2}$. Considering the process merely consisting of $H$ and $G$, let $Z$ be the number that $H$ happens before $G$ happens $\tfrac{1}{5}n^{0.5}$ times. Then $E[Z] \ge \left(\frac{3n+2}{2}-1\right) \tfrac{1}{5}n^{0.5} = \frac{3}{10}n^{1.5}$. It is not difficult to see that $Z+\tfrac{1}{5}n^{0.5}$ stochastically dominates the sum of $\tfrac{1}{5}n^{0.5}$ geometric variables with success probability $\frac{2}{3n+2}$. Hence, with the Chernoff bound for the sum of geometric variables in Lemma~\ref{lem:chernoff}(c), we have that 
\begin{align*}
\Pr\big[&Z+\tfrac{1}{5}n^{0.5}\le 2n+\tfrac{2}{5}n^{0.5}\big] \\
\le{}& \exp\left(-\frac{\left(1-\tfrac{20}{3}n^{-0.5}-\tfrac{4}{3}n^{-1}\right)^2\tfrac15 n^{0.5}}{2-\tfrac43 \left(1-\tfrac{20}{3}n^{-0.5}-\tfrac{4}{3}n^{-1}\right)}\right)\\
\le {}& \exp\left(-\frac{\left(1-2\cdot\tfrac{20e+1}{3e}n^{-0.5}\right)\tfrac15 n^{0.5}}{2-\tfrac43 \left(1-\tfrac{20e+1}{3e}n^{-0.5}\right)}\right)\\
\le {}& \exp\left(-\left(\frac{3}{1+\tfrac{20e+1}{3e}n^{-0.5}}-\frac32\right)\frac15 n^{0.5}\right)\\
\le {}& \exp\left(-\left(\frac{3}{1+\tfrac{20e+1}{3e\cdot 4e}}-\frac32\right)\frac15 n^{0.5}\right)\le \exp\left(-\frac{1}{20}n^{0.5}\right)
\end{align*}
where the second inequality uses the fact $\frac{4}{3n} \le \frac{4}{3\cdot4en^{0.5}}=\frac{1}{3en^{0.5}}$ for $n>(4e)^2$ and the fact $(1-x)^2 \ge 1-2x$ for $x\in\R$, and the penultimate inequality uses $n>(4e)^2$. Since $2n+\tfrac{1}{5}n^{0.5} \ge\frac{2n}{a}+\tfrac{1}{5}n^{0.5}$, we know with probability at least ${1-\exp(-\frac{1}{20}n^{0.5})}$, $|m_0|$ could go above $\frac{2n}{a}$.

When $|m_0|$ goes above $2n/a$, we consider the event $F$ that the current phase ends.
Recalling the proof in Lemma~\ref{lem:occupied}, we know that 
with probability at least ${1-(\frac{e}{e+1})^{\frac{1}{5}n^{0.5}}}$, $F$ happens once before $G$ happens $\frac{n}{a} \ge \frac15 n^{0.5}$ times, thus, before $|m_0|$ goes below $\frac{n}{a}$. 

In summary, in each phase, with probability at least $(1-\exp(-\tfrac{1}{20}n^{0.5}))(1-\exp(-\frac{1}{20}n^{0.5}))(1-(\frac{e}{e+1})^{\frac{1}{5}n^{0.5}}) \ge 1-3\exp(-\tfrac{1}{20}n^{0.5})$, the current front individuals could increase its number to more than $\frac{2n}{a}$, and will remain above $n/a$ afterwards. Hence, together with the runtime analysis of the original \mpoea on OneMax in~\cite{Witt06}, we have the runtime result for \mpoea on OneMax$_{(0,1^n)}$ in the following theorem, and know that comparing with OneMax function, the cost majorly lies on the $o(1)$ success probability for \mpoea solving the time-linkage OneMax$_{(0,1^n)}$, and the asymptotic complexity remains the same for the case when \mpoea is able to find the optimum.

\begin{theorem}
Given any $\delta > 0$. Let $n > (4(1+\delta)e)^2$. Consider using \mpoea with population size ${\mu\ge 4(1+\delta)(3e+1)(n+1)}$ to solve the OneMax$_{(0,1^n)}$ function. Consider the same phase in Lemma~\ref{lem:occupy}. Let $M$ denote the event that 
\begin{itemize}
\item the first generation has at most $\frac34 \mu$ individuals with the $(1,0)$ or $(1,1)$ first bit pattern, at least $\frac 14 \mu$ individuals with the $(0,0)$ pattern, and has all individuals with less than $\frac 34 n$ zeros;
\item there are at most $\frac12 \mu-1$ accumulative temporarily undefeated individuals before the current front individuals only have 1 zero;
\item the number of the current front individual with $a$ zeros can accumulate to $\frac{2n}{a}$ and stay above $\frac{n}{a}$ if the current phase does not end.
\end{itemize} 
Then event $M$ occurring with probability at least $1-(\mu+2) \exp(-\tfrac18 n) -\exp\left(-\frac{\delta^2}{2(1+\delta)}(n-1)\right)-3n\exp(-\tfrac{1}{20}n^{0.5})$, and conditional on $M$, 
 the expected runtime is $O(\mu n)$.
\label{thm:runtimempoEA}
\end{theorem}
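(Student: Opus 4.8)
The plan is to prove the two assertions of the theorem separately: the lower bound on $\Pr[M]$, and the $O(\mu n)$ bound on the expected runtime conditional on $M$. Since $M$ is the conjunction of its three defining bullets, I would bound the failure probability of each and finish by a union bound. The first bullet is exactly the initial-population event treated in the proof of Theorem~\ref{thm:mpoEAOM}; the same Chernoff estimates of Lemma~\ref{lem:chernoff}(a),(b) together with a union bound over the $\mu$ individuals give failure probability at most $(\mu+2)\exp(-\tfrac18 n)$. The second bullet is verbatim the conclusion of Lemma~\ref{lem:acctui}, contributing $\exp(-\tfrac{\delta^2}{2(1+\delta)}(n-1))$. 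For the third bullet I would work one phase at a time: the replica-accumulation comparison (the events $H$ and $G$ with $\Pr[G\mid H\cup G]\le\tfrac{2}{3n+2}$, followed by the Chernoff bound for geometric sums of Lemma~\ref{lem:chernoff}(c)) shows $|m_0|$ climbs above $\tfrac{2n}{a}$ with probability $1-\exp(-\tfrac1{20}n^{0.5})$, Lemma~\ref{lem:occupy} supplies the gain of more than $\tfrac15 n^{0.5}$ front individuals, and Lemma~\ref{lem:occupied} guarantees the phase ends before $|m_0|$ drops back below $\tfrac{n}{a}$; each phase therefore succeeds with probability at least $1-3\exp(-\tfrac1{20}n^{0.5})$, and a union bound over the at most $n$ phases yields the term $3n\exp(-\tfrac1{20}n^{0.5})$. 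Adding the three failure probabilities gives the stated lower bound on $\Pr[M]$.

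For the runtime I would condition on $M$ and follow the fertile-individuals template of Witt's analysis of the \mpoea on \onemax~\cite{Witt06}, with the current front individuals playing the role of the best individuals. Conditional on $M$ at most $\tfrac12\mu-1$ individuals are temporarily undefeated, so the current front and interior individuals always fill at least half of the population; the $(0,1)$ individuals can therefore depress the selection probability of a front individual by at most a factor of two, and it suffices to analyse the process as if no temporarily undefeated individual were present and then double the bound. I would first charge the $O(\mu)$ expected generations needed to flush out every $(1,0)$ and $(1,1)$ individual, exactly as in Theorem~\ref{thm:mpoEAOM}, where each generation deletes such an individual with probability $\Omega(1)$. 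After that the structural guarantees in $M$ ensure that no $(0,1)$ individual ever replaces the front, so the front evolves precisely like the best class of the classical \mpoea on \onemax.

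It then remains to sum the costs of the at most $n$ phases. In the phase with $a$ zeros, once $|m_0|\ge n/a$ (guaranteed by $M$), a fitness-improving $(0,0)$ offspring for $a\ge2$, or the optimum $(0,1^n)$ for $a=1$, is produced in a single generation with probability at least $\tfrac{|m_0|}{\mu}\cdot\tfrac{a-1}{n}(1-\tfrac1n)^{n-1}\ge\tfrac1{2e\mu}$, so the improving step costs $O(\mu)$ in expectation; and the accumulation of $|m_0|$ from $1$ to $n/a$ proceeds by mutation-free copies of front individuals, which at replica count $j$ succeed with probability $\Omega(j/\mu)$ and hence cost $\sum_{j=1}^{n/a}O(\mu/j)=O(\mu\log(n/a))$. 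Summing over all phases and using $\sum_{a=1}^n\log(n/a)=\log(n^n/n!)=O(n)$ gives the total $O(\mu n)$, which also matches Witt's $O(\mu n+n\log n)$ bound once $\mu=\Omega(n)$. The main obstacle I anticipate is making the conditioning on $M$ rigorous, because conditioning on a trajectory event changes the per-step distribution: I would phrase every per-phase bound as a deterministic consequence of the structural guarantees carried by $M$ (namely $|m_0|\ge n/a$ and the non-interference of the $(0,1)$ individuals) and apply the drift/fitness-level machinery only to the resulting cleaned process, absorbing the factor-two loss from the temporarily undefeated individuals. A secondary difficulty is transporting Witt's replica-accumulation argument into the time-linkage setting, where the fertile individuals form the $(0,0)$ front rather than a single fitness level.
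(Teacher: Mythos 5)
Your proposal is correct and follows essentially the same route as the paper: the same union-bound decomposition of $\Pr[M]$ over the three bullets (initial-population Chernoff bounds from Theorem~\ref{thm:mpoEAOM}, Lemma~\ref{lem:acctui}, and the per-phase $H$/$G$, Lemma~\ref{lem:occupy}, Lemma~\ref{lem:occupied} argument with a union bound over the at most $n$ phases), followed by Witt-style replica/fitness-level accounting for the conditional runtime. The only difference is one of explicitness: where the paper simply invokes the runtime analysis of~\cite{Witt06}, you spell out the $O(\mu)$ improvement cost per phase and the $\sum_{a}O(\mu\log(n/a))=O(\mu n)$ accumulation cost, which is exactly the calculation the citation stands for.
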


Recalling that the expected runtime of $(\mu+1)$~EA on OneMax is $O(\mu n+n\log n)$~\cite{Witt06}, which is $O(n\log n)$ for $\mu=O(\log n)$. Since $\mu=\Omega(n)$ is required for the convergence on OneMax$_{(0,1^n)}$ in Section~\ref{subsec:muoea}, Theorem~\ref{thm:runtimempoEA} shows the expected runtime for OneMax$_{(0,1^n)}$ is $O(n^2)$ if we choose $\mu=\Theta(n)$, which is the same complexity as for OneMax with $\mu=\Theta(n)$. To this degree, comparing $(\mu+1)$~EA solving the time-linkage OneMax$_{(0,1^n)}$ with the original OneMax function, we may say the cost majorly lies on $o(1)$ convergence probability, not the asymptotic complexity.

\section{Conclusion and Future Work}
\label{sec:con}
In recent decades, rigorous theoretical analyses on EAs has progressed significantly. However, despite that many real-world applications have the time-linkage property, that is, the objective function relies on more than one time-step solutions, the theoretical analysis on the fitness function with time-linkage property remains an open problem.


This paper took the first step into this open area. We designed the time-linkage problem OneMax$_{(0,1^n)}$, which considers an opposite preference of the first bit value of the previous time step into the basic OneMax function. Via this problem, we showed that EAs with a population can prevent some stagnation in some deceptive situations caused by the time-linkage property. More specifically, we proved that the simple RLS and \oea cannot reach the optimum of OneMax$_{(0,1^n)}$ with $1-o(1)$ probability but $(\mu+1)$ EA can find the optimum with $1-o(1)$ probability.

The time-linkage OneMax$_{(0,1^n)}$ problem is simple. Only the immediate previous generation and the first bit value of the historical solutions matter for the fitness function. Our future work should consider more complicated algorithms, e.g., with crossover, on more general time-linkage pseudo-Boolean functions, e.g., with more than one bit value and other weight values for the historical solutions, and problems with practical backgrounds. 

\section*{Acknowledgement}
We thank Liyao Gao for his participation and discussion on the \oea part during his summer intern.

\ifCLASSOPTIONcaptionsoff
  \newpage
\fi




\begin{thebibliography}{10}
\providecommand{\url}[1]{#1}
\csname url@samestyle\endcsname
\providecommand{\newblock}{\relax}
\providecommand{\bibinfo}[2]{#2}
\providecommand{\BIBentrySTDinterwordspacing}{\spaceskip=0pt\relax}
\providecommand{\BIBentryALTinterwordstretchfactor}{4}
\providecommand{\BIBentryALTinterwordspacing}{\spaceskip=\fontdimen2\font plus
\BIBentryALTinterwordstretchfactor\fontdimen3\font minus
  \fontdimen4\font\relax}
\providecommand{\BIBforeignlanguage}[2]{{%
\expandafter\ifx\csname l@#1\endcsname\relax
\typeout{** WARNING: IEEEtran.bst: No hyphenation pattern has been}%
\typeout{** loaded for the language `#1'. Using the pattern for}%
\typeout{** the default language instead.}%
\else
\language=\csname l@#1\endcsname
\fi
#2}}
\providecommand{\BIBdecl}{\relax}
\BIBdecl

\bibitem{NeumannW10}
F.~Neumann and C.~Witt, \emph{Bioinspired Computation in Combinatorial
  Optimization – Algorithms and Their Computational Complexity}.\hskip 1em
  plus 0.5em minus 0.4em\relax Springer, Berlin, Heidelberg, 2010.

\bibitem{AugerD11}
A.~Auger and B.~Doerr, \emph{Theory of Randomized Search Heuristics:
  Foundations and Recent Developments}.\hskip 1em plus 0.5em minus 0.4em\relax
  World Scientific, 2011.

\bibitem{Jansen13}
T.~Jansen, \emph{Analyzing Evolutionary Algorithms: The Computer Science
  Perspective}.\hskip 1em plus 0.5em minus 0.4em\relax Springer Science \&
  Business Media, 2013.

\bibitem{ZhouYQ19}
Z.-H. Zhou, Y.~Yu, and C.~Qian, \emph{Evolutionary Learning: Advances in
  Theories and Algorithms}.\hskip 1em plus 0.5em minus 0.4em\relax Springer,
  2019.

\bibitem{DoerrN20}
B.~Doerr and F.~Neumann, \emph{Theory of Evolutionary Computation: Recent
  Developments in Discrete Optimization}.\hskip 1em plus 0.5em minus
  0.4em\relax Springer Nature, 2020.

\bibitem{Bosman05}
P.~A.~N. Bosman, ``Learning, anticipation and time-deception in evolutionary
  online dynamic optimization,'' in \emph{Genetic and Evolutionary Computation
  Conference, {GECCO} 2005, Workshop Proceedings}.\hskip 1em plus 0.5em minus
  0.4em\relax ACM, 2005, pp. 39--47.

\bibitem{Nguyen11}
T.~T. Nguyen, ``Continuous dynamic optimisation using evolutionary
  algorithms,'' Ph.D. dissertation, University of Birmingham, 2011.

\bibitem{MorimotoOSB07}
T.~Morimoto, Y.~Ouchi, M.~Shimizu, and M.~Baloch, ``Dynamic optimization of
  watering satsuma mandarin using neural networks and genetic algorithms,''
  \emph{Agricultural water management}, vol.~93, no. 1-2, pp. 1--10, 2007.

\bibitem{Droste02}
S.~Droste, ``Analysis of the (1+1) {EA} for a dynamically changing
  onemax-variant,'' in \emph{Congress on Evolutionary Computation, CEC 2002},
  vol.~1.\hskip 1em plus 0.5em minus 0.4em\relax IEEE, 2002, pp. 55--60.

\bibitem{RohlfshagenLY09}
P.~Rohlfshagen, P.~K. Lehre, and X.~Yao, ``Dynamic evolutionary optimisation:
  an analysis of frequency and magnitude of change,'' in \emph{Genetic and
  Evolutionary Computation Conference, GECCO 2009}.\hskip 1em plus 0.5em minus
  0.4em\relax ACM, 2009, pp. 1713--1720.

\bibitem{KotzingM12}
T.~K{\"o}tzing and H.~Molter, ``{ACO} beats {EA} on a dynamic pseudo-boolean
  function,'' in \emph{International Conference on Parallel Problem Solving
  from Nature, PPSN 2012}.\hskip 1em plus 0.5em minus 0.4em\relax Springer,
  2012, pp. 113--122.

\bibitem{JansenZ15}
T.~Jansen and C.~Zarges, ``Analysis of randomised search heuristics for dynamic
  optimisation,'' \emph{Evolutionary Computation}, vol.~23, no.~4, pp.
  513--541, 2015.

\bibitem{LenglerS18}
J.~Lengler and U.~Schaller, ``The (1+1)-{EA} on noisy linear functions with
  random positive weights,'' in \emph{IEEE Symposium Series on Computational
  Intelligence, SSCI 2018}.\hskip 1em plus 0.5em minus 0.4em\relax IEEE, 2018,
  pp. 712--719.

\bibitem{LenglerM20}
J.~Lengler and J.~Meier, ``Large population sizes and crossover help in dynamic
  environments,'' \emph{arXiv preprint arXiv:2004.09949}, 2020.

\bibitem{LissovoiW15}
A.~Lissovoi and C.~Witt, ``Runtime analysis of ant colony optimization on
  dynamic shortest path problems,'' \emph{Theoretical Computer Science}, vol.
  561, pp. 73--85, 2015.

\bibitem{NeumannW15}
F.~Neumann and C.~Witt, ``On the runtime of randomized local search and simple
  evolutionary algorithms for dynamic makespan scheduling,'' in
  \emph{International Joint Conference on Artificial Intelligence, IJCAI
  2015}.\hskip 1em plus 0.5em minus 0.4em\relax {AAAI} Press, 2015, pp.
  3742--3748.

\bibitem{PourhassanGN15}
M.~Pourhassan, W.~Gao, and F.~Neumann, ``Maintaining 2-approximations for the
  dynamic vertex cover problem using evolutionary algorithms,'' in
  \emph{Genetic and Evolutionary Computation Conference, GECCO 2015}.\hskip 1em
  plus 0.5em minus 0.4em\relax ACM, 2015, pp. 903--910.

\bibitem{RoostapourNNF19}
V.~Roostapour, A.~Neumann, F.~Neumann, and T.~Friedrich, ``Pareto optimization
  for subset selection with dynamic cost constraints,'' in \emph{{AAAI}
  Conference on Artificial Intelligence, {AAAI} 2019}, 2019, pp. 2354--2361.

\bibitem{BossekNPS19}
J.~{Bossek}, F.~{Neumann}, P.~{Peng}, and D.~{Sudholt}, ``Runtime analysis of
  randomized search heuristics for dynamic graph coloring,'' in \emph{Genetic
  and Evolutionary Computation Conference, GECCO 2019}.\hskip 1em plus 0.5em
  minus 0.4em\relax ACM, 2019, pp. 1443--1451.

\bibitem{SuttonB18}
R.~S. Sutton and A.~G. Barto, \emph{Reinforcement Learning: An Introduction},
  2nd~ed.\hskip 1em plus 0.5em minus 0.4em\relax MIT Press, 2018.

\bibitem{Doerr11}
B.~Doerr, ``Analyzing randomized search heuristics: Tools from probability
  theory,'' in \emph{Theory of Randomized Search Heuristics: Foundations and
  Recent Developments}.\hskip 1em plus 0.5em minus 0.4em\relax World
  Scientific, 2011, pp. 1--20.

\bibitem{Doerr20}
------, ``Probabilistic tools for the analysis of randomized optimization
  heuristics,'' in \emph{Theory of Evolutionary Computation: Recent
  Developments in Discrete Optimization}, B.~Doerr and F.~Neumann, Eds.\hskip
  1em plus 0.5em minus 0.4em\relax Springer, 2020, pp. 1--87.

\bibitem{ChenTCY12}
T.~Chen, K.~Tang, G.~Chen, and X.~Yao, ``A large population size can be
  unhelpful in evolutionary algorithms,'' \emph{Theoretical Computer Science},
  vol. 436, pp. 54--70, 2012.

\bibitem{HeY02}
J.~He and X.~Yao, ``From an individual to a population: An analysis of the
  first hitting time of population-based evolutionary algorithms,'' \emph{IEEE
  Transactions on Evolutionary Computation}, vol.~6, no.~5, pp. 495--511, 2002.

\bibitem{DangJL17}
D.-C. Dang, T.~Jansen, and P.~K. Lehre, ``Populations can be essential in
  tracking dynamic optima,'' \emph{Algorithmica}, vol.~78, no.~2, pp. 660--680,
  2017.

\bibitem{CorusO19}
D.~{Corus} and P.~S. {Oliveto}, ``On the benefits of populations for the
  exploitation speed of standard steady-state genetic algorithms,'' in
  \emph{Genetic and Evolutionary Computation Conference, GECCO 2019}.\hskip 1em
  plus 0.5em minus 0.4em\relax ACM, 2019, pp. 1452--1460.

\bibitem{Sudholt20}
D.~Sudholt, ``The benefits of population diversity in evolutionary algorithms:
  A survey of rigorous runtime analyses,'' in \emph{Theory of Evolutionary
  Computation: Recent Developments in Discrete Optimization}, B.~Doerr and
  F.~Neumann, Eds.\hskip 1em plus 0.5em minus 0.4em\relax Springer, 2020, pp.
  359--404.

\bibitem{Witt06}
C.~Witt, ``Runtime analysis of the ($\mu$+1) {EA} on simple pseudo-boolean
  functions,'' \emph{Evolutionary Computation}, vol.~14, no.~1, pp. 65--86,
  2006.

\bibitem{DoerrZ20}
B.~Doerr and W.~Zheng, ``Working principles of binary differential evolution,''
  \emph{Theoretical Computer Science}, vol. 801, pp. 110--142, 2020.

\end{thebibliography}

%
%
%

%
\begin{IEEEbiography}[{\includegraphics[width=1in,height=1.25in,clip,keepaspectratio]{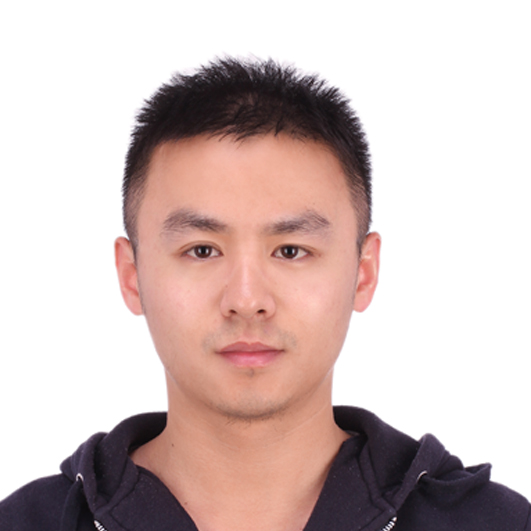}}]{Weijie Zheng}
received Bachelor Degree (July 2013) in Mathematics and Applied Mathematics from Harbin Institute of Technology, Harbin, Heilongjiang, China, and Doctoral Degree (October 2018) in Computer Science and Technology from Tsinghua University, Beijing, China. 

He is now a postdoctoral researcher in the Department of Computer Science and Engineering at Southern University of Science and Technology, Shenzhen, Guangdong, China, and in the School of Computer Science and Technology, University of Science and Technology of China, Hefei, Anhui, China.

His current research majorly focuses on the theoretical analysis and design of evolutionary algorithms, especially binary differential evolution and estimation-of-distribution algorithms.
\end{IEEEbiography}

\begin{IEEEbiography}[{\includegraphics[width=1in,height=1.25in,clip,keepaspectratio]{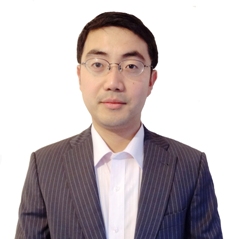}}]{Huanhuan Chen}
received the B.Sc. degree from the University of Science and Technology of China (USTC), Hefei, China, in 2004, and the Ph.D. degree in computer science from the University of Birmingham, Birmingham, U.K., in 2008.

He is currently a Full Professor with the School of Computer Science and Technology, USTC. His current research interests include neural networks, Bayesian inference, and evolutionary computation.

Dr. Chen was a recipient of the 2015 International Neural Network Society Young Investigator Award, the 2012 IEEE Computational Intelligence Society Outstanding Ph.D. Dissertation Award, the IEEE TRANSACTIONS ON NEURAL NETWORKS Outstanding Paper Award, and the 2009 British Computer Society Distinguished Dissertations Award. He is an Associate Editor of the IEEE TRANSACTIONS ON NEURAL NETWORKS AND LEARNING SYSTEMS and the IEEE TRANSACTIONS ON EMERGING TOPICS IN COMPUTATIONAL INTELLIGENCE.
\end{IEEEbiography}

\begin{IEEEbiography}[{\includegraphics[width=1in,height=1.25in,clip,keepaspectratio]{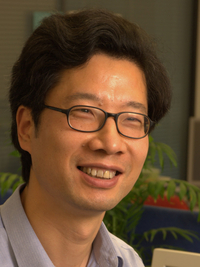}}]{Xin Yao}
obtained his Ph.D. in 1990 from the University of Science and Technology of China (USTC), MSc in 1985 from North China Institute of Computing Technologies and BSc in 1982 from USTC. 

He is a Chair Professor of Computer Science at the Southern University of Science and Technology, Shenzhen, China, and a part-time Professor of Computer Science at the University of Birmingham, UK. He is an IEEE Fellow and was a Distinguished Lecturer of the IEEE Computational Intelligence Society (CIS). He was the President (2014-15) of IEEE CIS and the Editor-in-Chief (2003-08) of IEEE Transactions on Evolutionary Computation. His major research interests include evolutionary computation, ensemble learning, and their applications to software engineering. His work won the 2001 IEEE Donald G. Fink Prize Paper Award; 2010, 2016 and 2017 IEEE Transactions on Evolutionary Computation Outstanding Paper Awards; 2011 IEEE Transactions on Neural Networks Outstanding Paper Award; and many other best paper awards at conferences. He received a prestigious Royal Society Wolfson Research Merit Award in 2012, the IEEE CIS Evolutionary Computation Pioneer Award in 2013 and the 2020 IEEE Frank Rosenblatt Award.
\end{IEEEbiography}


%
%




\end{document}